\documentclass[letterpaper, 10 pt, conference]{ieeeconf}

\IEEEoverridecommandlockouts   

\usepackage[utf8]{inputenc} 
\usepackage[T1]{fontenc}    
\usepackage{url}            
\usepackage{booktabs}       
\usepackage{amsfonts}       
\usepackage{graphicx}%
\usepackage{amsmath}
\usepackage{amssymb}
\usepackage{xcolor}
\usepackage{accents}
\usepackage{caption}
\usepackage{cite}
\usepackage{xcolor}
\usepackage{multirow}
\usepackage{siunitx}
\usepackage{amsmath,bm}
\captionsetup[table]{font=footnotesize}
\usepackage{algorithm}
\usepackage{algpseudocode}
\usepackage{caption}
\usepackage{subcaption}
\usepackage{stfloats}
\usepackage{soul}
\setstcolor{red}
\usepackage[colorlinks=true, urlcolor=red, breaklinks=true]{hyperref}
\usepackage{breakurl}

\newtheorem{theorem}{Theorem}

\newcommand{\longthmtitle}[1]{\mbox{} \emph{(#1):}}

\title{
CVIRO: A Consistent and Tightly-Coupled Visual-Inertial-Ranging Odometry on Lie Groups
}
\author{Yizhi Zhou, Ziwei Kang, Jiawei Xia, Xuan Wang 
\thanks{Y. Zhou, Z. Kang, J. Xia, and X. Wang are with the Department of Electrical and Computer Engineering, George Mason University, Virginia, USA. This research is supported by ECCS-$2332210$.}
}

\begin{document}
\maketitle
\thispagestyle{empty}
\pagestyle{empty}

\begin{abstract}
Ultra-Wideband (UWB) is widely used to mitigate drift in visual-inertial odometry (VIO) systems. Consistency is crucial for ensuring the estimation accuracy of a UWB-aided VIO system. 
An inconsistent estimator can degrade localization performance, where the inconsistency primarily arises from two main factors: (1) the estimator fails to preserve the correct system observability, and (2) UWB anchor positions are assumed to be known, leading to improper neglect of calibration uncertainty.
In this paper, we propose a consistent and tightly-coupled visual-inertial-ranging odometry (CVIRO) system based on the Lie group. Our method incorporates the UWB anchor state into the system state, explicitly accounting for UWB calibration uncertainty and enabling the joint and consistent estimation of both robot and anchor states. Furthermore, observability consistency is ensured by leveraging the invariant error properties of the Lie group. We analytically prove that the CVIRO algorithm naturally maintains the system’s correct unobservable subspace, thereby preserving estimation consistency. Extensive simulations and experiments demonstrate that CVIRO achieves superior localization accuracy and consistency compared to existing methods.
\end{abstract}

\section{INTRODUCTION}
Visual-inertial odometry (VIO) is widely used in autonomous driving and unmanned aerial vehicles due to its accuracy, reliability, and lightweight design \cite{OVS, VM}. However, VIO inevitably suffers from cumulative drift for long-term running due to the lack of global information during estimation. To mitigate this issue, VIO systems commonly incorporate global measurements from additional sensors, such as GPS. However, GPS relies heavily on open environments, rendering it ineffective in indoor or GPS-denied settings. Recent studies have explored integrating Ultra Wideband (UWB) measurements into VIO systems. UWB utilizes ranging measurements between robots and stationary anchors to enhance localization performance, which are suitable for both indoor and outdoor GPS-denied scenarios \cite{Jia2022, Hc2024, NTM2021}.


In general, a well-designed UWB-aided VIO system should not only be error-bounded but also consistent while fusing ranging measurements.  Inconsistent estimators often underestimate system uncertainty, ultimately reducing estimation accuracy. A primary cause of this inconsistency is improper handling of uncertainties associated with the fused information. For example, the commonly adopted loosely-coupled method, which separately estimates anchor positions and robot localization, requires precise calibration of UWB anchors before integration. However, treating anchor positions as fixed priors without properly considering their calibration uncertainty ultimately compromises system consistency. Moreover, many loosely-coupled methods require manual computation of anchor positions, which is labor-intensive and prone to errors. In contrast, the tightly-coupled method provides an ideal solution by treating anchor positions as unknown variables and directly incorporating the anchor position into the system state, thereby ensuring consistency throughout the estimation process. 

In addition to inaccuracies in UWB calibration, another major source of inconsistency is the failure to preserve the system’s original observability properties. An estimator lacking observability consistency can introduce spurious information into unobservable directions, leading to overconfidence and degraded accuracy \cite{HG2014, FEJ}. A commonly used approach to address this issue involves artificially modifying the estimator’s observability subspace, either by changing the linearization point of measurement Jacobians, known as the First-Estimate Jacobian (FEJ) method \cite{FEJ}, or by directly adjusting the Jacobian itself, referred to as the Observability-Constrained (OC) method \cite{HG2014}. However, the OC method often incurs high computational costs, while the FEJ method suffers from poor initialization \cite{FEJ2}. In UWB-aided VIO systems, the performance of the FEJ method is further degraded due to inherently poor UWB initialization quality and imperfect calibration results.




In this paper, we consider a more general scenario in which anchor positions are unknown and subject to uncertainties arising from calibration. Furthermore, UWB initialization and calibration is imperfect, introducing additional sources of estimation error. 
To effectively and consistently fuse UWB measurements under these two conditions, we propose a consistent and tightly-coupled visual-inertial-ranging odometry (CVIRO) formulated on the Lie group.
First, to enable autonomous calibration and explicitly model calibration uncertainty during estimation, we incorporate UWB anchor positions into the system state, allowing for the joint estimation of both robot and anchor states in a tightly-coupled manner. Second, we propose a Lie group-based multi-constrained Kalman filter (MSCKF) to CVIRO, where the system state is defined on the Lie group and leverages the properties of right-invariant errors. We proved that the CVIRO estimator preserves four unobservable directions, which are exactly aligned with the system's original unobservable directions, thereby ensuring observability consistency. Unlike FEJ or OC methods, which require external manipulation of the system's linearization jacobian to maintain observability consistency  that may introduce addition errors \cite{FEJ2, FEJ}, our approach naturally preserves the correct observability properties without any artificial remedies. In summary, the main contribution of this work can be summarized as
\begin{itemize}
    \item We propose a consistent and tightly-coupled visual-inertial-ranging odometry (CVIRO) system based on the Lie group for joint robot localization and UWB sensor calibration.
    \item We analytically derive that the proposed CVIRO system has four unobservable directions and naturally preserves the correct observability properties of the original system, ensuring estimation consistency (see the supplementary material for more details, available at \cite{zzSPLY2025} or \textit{the provided link}\footnote{\scriptsize\url{https://mason.gmu.edu/~xwang64/papers/cviro_supp.pdf}}).
    \item We perform extensive simulations and experiments to validate the effectiveness of the proposed system as well as the derived observability properties that guarantees the system's consistency.
\end{itemize}

\section{Related Works}
UWB-aided VIO systems have been extensively studied to improve VIO estimation performance and enable drift-free lifelong localization by leveraging UWB sensor measurements. Several earlier works have employed UWB measurements in a loosely coupled manner, requiring UWB anchors to be calibrated prior to use \cite{WC2017, PF2017, ZJ2022, YB2021, SSL2021}, and have demonstrated promising localization performance.
Nevertheless, these methods require offline calibration of anchor positions, which can be challenging in complex and large-scale environments. Furthermore, all these methods assume that UWB anchor calibration is precise and neglect the uncertainty associated with calibration errors. This oversight can lead the estimator to underestimate the system’s uncertainty during localization, ultimately resulting in inconsistency. 

In recent years, researchers have proposed tightly-coupled methods that directly incorporate anchor states into the system state, enabling the joint estimation of both anchor positions and the robot pose \cite{ZS20222, TJ2018, NT2020}. This approach offers two key advantages: (1) UWB anchors can be automatically calibrated using a mobile robot, eliminating the need for offline calibration, and (2) calibration uncertainty is explicitly accounted for within the estimator, rather than being ignored as in loosely-coupled methods. As a result, this significantly enhances the system's consistency, leading to improved localization accuracy. A visual-inertial range SLAM system was proposed in \cite{Cao2021}, which externally incorporates UWB measurements into a factor graph to jointly estimate UWB anchor positions and robot states. To mitigate the non-line-of-sight (NLOS) issue of UWB sensors, researchers in \cite{Hc2024} introduced an object detection-based calibration method that leverages visual measurements instead of UWB ranging data to initialize multiple anchors with unknown positions.
However, these methods are built upon optimization-based approaches to achieve high estimation accuracy, which are generally computationally intensive, making them less suitable for real-time applications and resource-constrained environments.
In \cite{DGS2023}, a UWB-aided VIO system is proposed that not only estimates anchor positions but also jointly calibrates the bias terms in the UWB measurement model to further enhance localization accuracy. 

All the above-mentioned filter-based methods \cite{DGS2023} achieve accurate and lightweight estimation performance, while ensuring consistency by explicitly incorporating UWB calibration uncertainty. Nevertheless, similar to most VIO systems, another critical source of inconsistency arises from the system’s observability properties \cite{FEJ}. Observability can determine the minimal measurements
required for the reconstruction of system states, which is crucial consistent state estimation. If an estimator fails to preserve the correct system's observability properties, the estimator may introduce additional errors into the unobservable subspaces and ignore the uncertainty from these subspaces that makes the estimator overconfident and inconsistent. Specifically, Huang et al. \cite{HG2014} proved that a VIO system generally has four unobservable subspaces corresponding to global translations and yaw orientations, where the yaw orientation subspace is mistakenly eliminated during EKF estimation, resulting in inconsistency in the estimation process. One approach to mitigate this issue is the first-estimate Jacobian (FEJ) technique, which preserves the correct observability properties of the EKF system by directly modifying the linearization point of EKF. This technique has been further extended to filter-based UWB-aided VIO systems \cite{Jia2022} for consistent estimation. However, FEJ-based methods are sensitive to the initialization performance, as poor system initialization can introduce additional estimation errors, even when observability consistency is maintained \cite{FEJ2}. Instead of directly modifying the linearization jacobians like FEJ does, another line to maintain observability consistency is to employ the invariant observer theory which used the Lie group $SE(n)$ to represent the robot state \cite{YYL2022, Xuz2023}. 
By leveraging the properties of invariant errors, such estimators are referred to as Invariant Extended Kalman Filters (InEKF) \cite{BA2017} and naturally preserve the correct unobservable subspace of the original system, ensuring consistency. InEKF has been widely applied in robot navigation \cite{Heo2018}, state estimation \cite{BM2018, RH2020}, and SLAM \cite{YYL2022, zhang2017convergence} to design estimators with observability consistency. 
In this paper, we apply a Lie group-based state representation that explicitly incorporates UWB anchor positions into the system state. We prove that the proposed approach preserves the correct observability properties of the visual-inertial ranging odometry system, thereby ensuring consistency.

\section{System Design}
In this section, we provide a detailed introduction to the CVIRO system, which integrates measurements from an IMU, camera, and UWB within Lie group-based MSCKF framework. In particular, consider a robot in a 3-D environment equipped with a visual-inertial sensor for ego-motion measurement and a UWB tag to measure the distance with multiple UWB anchors with unknown positions. We first define the IMU and camera frames at time $t_k$ as $I_k$ and $C_k$, respectively, while $G$ represents the global frame. The primary objective of the system is to jointly estimate the robot's (IMU's) states along with the positions of the UWB anchors simultaneously.
Notably, unlike the standard MSCKF framework, which is formulated in the vector space \cite{MSCKF}, the proposed CVIRO system operates directly on the matrix Lie group representation, leveraging the inherent manifold structure to enhance the consistency and accuracy of the estimator.

\subsection{System state}
The state of the proposed CVIRO at time $t_k$ contains the current IMU state, the position of the UWB anchors, and a sliding window of $m$ historical IMU clones as
\begin{align}\label{eq_state}
\mathbf{X}_k&=\left(
\mathbf X_{I_k}, \mathbf X_{C} \right)\nonumber\\
\mathbf X_{C}&=\left(\mathbf C_{k-1},  \cdots, \mathbf C_{k-m} \right)
\end{align}
Here, the IMU state and the UWB anchor state are jointly represented as $\mathbf{X}_{I_k}=(\mathbf T_{k}, \mathbf B_{k})\in SE_{2+L}(3)\times \mathbb R^6$, where $SE_{2+L}(3)$ represents the augmented Special Euclidean Group in three dimension \cite{BA2017}, and $L$ represents the number of UWB anchors. For simplicity, we assume that the state only contains one UWB anchors, i.e., $L=1$, which can be easily generalizable to multi-anchor cases based on the definitions of $SE_{2+L}(3)$ group. Specifically, the state representation $\mathbf T_k\in SE_{2+L}(3)$ is defined as
\begin{align}
\mathbf T_{k}&=\left[
\begin{array}{c|ccc}
{^{G}_{I_{k}}\mathbf{R}} & ^G\mathbf{v}_{I_{k}} & ^G\mathbf{p}_{I_{k}} & ^G\mathbf{p}_{u}\\
\hline
\mathbf 0_{3}&\multicolumn{3}{c}{\mathbf I_{3}}
\end{array}
\right]\in\ SE_3(3), 
\end{align}
that contains the IMU's rotation ${^G_{I_{k}}\mathbf{R}}\in SO(3)$ from the inertial frame ${I_{k}}$ to global frame ${G}$ at time $t_k$, and IMU's position ${^G{\mathbf p}_{I_{k}}}$ and velocity ${^G{\mathbf v}_{I_{k}}}$ in the global frame. It is important to highlight that, in addition to the IMU states, $\mathbf T_{k}$ also contains the UWB anchors' positions $^G \mathbf p_u$ since it is also unknown parameters that need to be estimated. 
$\mathbf B_k=\begin{bmatrix} \mathbf b_{\omega_k}^\top&\mathbf b_{a_k}^\top \end{bmatrix}^\top\in\mathbb R^6$ consists the IMU's gyroscope bias $\mathbf b_{\omega_k}$ and accelerometer bias 
$\mathbf b_{a_k}$.
For $i\in\{k-m, \cdots, k-1\}$, 
$\mathbf{C}_i\in SE(3)$,  denotes the historical clones of the IMU’s pose at time $t_i$ given by
\begin{align}
\mathbf{C}_i=\begin{bmatrix}
{^G_{I_i}\mathbf R} & ^G\mathbf p_{I_i}\\
\mathbf 0_{1\times 3} & 1
\end{bmatrix}    
\end{align}
By defining $\hat {\mathbf X}_k=\left(\hat {\mathbf X}_{I_k}, \hat{\mathbf X}_{C}\right)$ as the state estimate of $\mathbf X_k$, we formulate the error state $\widetilde{\mathbf X}_k$ as
\begin{align}
\widetilde{\mathbf X}_k=\left(\hat{\mathbf T}_k \mathbf T_k^{-1}, \hat{\mathbf B}_k-\mathbf B_k, \hat{\mathbf C}_{k-1} \mathbf C_{k-1}^{-1}, \cdots, \hat{\mathbf C}_{k-m} \mathbf C_{k-m}^{-1}\right)    
\end{align}
where $\hat{\mathbf T}_k \mathbf T_k^{-1}\in SE_3(3)$ denotes the right invariant error given by
\begin{align}
\hat{\mathbf T}_k \mathbf T_k^{-1}&=
\left[
\begin{array}{c|ccc}
{^{G}_{I_{k}}{\mathbf {\widetilde R}}} & \bm\Gamma_1& \bm\Gamma_2 & \bm\Gamma_3\\
\hline
\mathbf 0_{3}&\multicolumn{3}{c}{\mathbf I_{3}}
\end{array}
\right]\nonumber\\
{^{G}_{I_{k}}{\mathbf {\widetilde R}}}&= {^{G}_{I_{k}}{\mathbf {\hat R}}}({^{G}_{I_{k}}{\mathbf R}})^{\top},\quad
\bm\Gamma_1={^G\hat{\mathbf v}}_{I_{k}} -{^{G}_{I_{k}}{\mathbf {\widetilde R}}} {^G{\mathbf v}_{I_{k}}} \nonumber\\
\bm\Gamma_2&={^G\hat{\mathbf p}}_{I_{k}} -{^{G}_{I_{k}}{\mathbf {\widetilde R}}} {^G{\mathbf p}_{I_{k}}}, \quad
\bm\Gamma_3={^G\hat{\mathbf p}}_{u} -{^{G}_{I_{k}}{\mathbf {\widetilde R}}} {^G{\mathbf p}_{f}},\nonumber
\end{align}
$\widetilde{\mathbf B}_k \in \mathbb R^6$ is the error of the IMU bias given as
\begin{align}\label{eq_err_b}
\widetilde{\mathbf B}_k&=\begin{bmatrix}
(\hat{\mathbf b}_{\omega_k}-\mathbf b_{\omega_k})^\top & (\hat{\mathbf b}_{a_k}-\mathbf b_{a_k})^\top
\end{bmatrix}
\end{align}
and $\hat{\mathbf C}_i {\mathbf C}_i^{-1}$, for $i\in\{k,\cdots, k-m\}$,
is defined in $SE(3)$ as 
\begin{align}
\hat{\mathbf C}_i {\mathbf C}_i^{-1}&=\begin{bmatrix}
{^{G}_{I_{i}}{\mathbf {\hat R}}}({^{G}_{I_{i}}{\mathbf R}})^{\top}&{^G\hat{\mathbf p}}_{I_{i}} -{^{G}_{I_{i}}{\mathbf {\hat R}}}({^{G}_{I_{i}}{\mathbf R}})^{\top} {^G{\mathbf p}_{I_{i}}} \\
\mathbf 0_{1\times3} & 1
\end{bmatrix}
\end{align}
By applying the log-linear property of the invariant error \cite{BA2017}, errors $\hat{\mathbf T}_k {\mathbf T}_k^{-1}$ and errors $\hat{\mathbf C}_i {\mathbf C}_i^{-1}$ can be approximated using a first-order approximation as follows:
\begin{align}\label{eq_logln}
\hat{\mathbf T}_k {\mathbf T}_k^{-1}&=\exp_\mathcal G(\bm \xi_{I_k})\approx \mathbf I_{6}+\bm {\xi}_{I_k}^\wedge\in\mathbb R^{6\times 6}\nonumber\\
\hat{\mathbf C}_i {\mathbf C}_i^{-1}&=\exp_\mathcal G(\bm \xi_{c_i})\approx \mathbf I_{6}+\bm {\xi}_{c_i}^\wedge\in\mathbb R^{4\times 4}
\end{align}
where $(\cdot)^\wedge: \mathbb R^{\text{dim} \mathfrak g}\to \mathfrak g$ be the linear map that transforms the error vector $\bm \xi_{I_k}$ and $\bm \xi_{c_k}$ defined in the Lie algebra to its corresponding matrix representation \cite{BA2017} as
\begin{align}\label{eq_err_lie1}
\bm \xi_{I_k}&\triangleq\begin{bmatrix}
(\bm \xi_{\theta_{k}})^\top & (\bm \xi_{v_{k}})^\top & (\bm \xi_{p_{k}})^\top & (\bm\xi_{u_k})^\top
\end{bmatrix}^\top \in \mathbb R^{12}\nonumber\\
\bm \xi_{\theta_{k}}&=\widetilde{\bm \theta}_{k}\nonumber=\log({^{G}_{I_{k}}{\mathbf {\widetilde R}}})\in\mathbb R^3\\
\bm \xi_{v_{k}}&={^G\hat{\mathbf v}}_{I_{k}}-(\mathbf I_3+\lfloor \widetilde{\bm \theta}_{k}\times\rfloor)^G \mathbf v_{I_{k}}\in\mathbb R^3\nonumber\\
\bm \xi_{p_{k}}&={^G\hat{\mathbf p}}_{I_{k}}-(\mathbf I_3+\lfloor \widetilde{\bm \theta}_{k}\times\rfloor)^G \mathbf p_{I_{k}}\in\mathbb R^3 \nonumber\\
\bm \xi_{u_k}&={^G\hat{\mathbf p}}_{u}-(\mathbf I_3+\lfloor \widetilde{\bm \theta}_{k}\times\rfloor)^G \mathbf p_{u}\in\mathbb R^3,
\end{align}
and 
\begin{align}\label{eq_err_lie2}
\bm \xi_{c_i}&\triangleq\begin{bmatrix}
(\bm \xi_{\theta_{i}})^\top & (\bm \xi_{p_{i}})^\top 
\end{bmatrix}^\top \in \mathbb R^{6}\nonumber\\
\bm \xi_{\theta_{i}}&=\widetilde{\bm \theta}_{k}\nonumber=\log({^{G}_{I_{i}}{\mathbf {\widetilde R}}})\in\mathbb R^3\\
\bm \xi_{p_{i}}&={^G\hat{\mathbf p}}_{I_{i}}-(\mathbf I_3+\lfloor \widetilde{\bm \theta}_{i}\times\rfloor)^G \mathbf p_{I_{i}}\in\mathbb R^3 
\end{align}
Give the error definitions in \eqref{eq_err_b}, \eqref{eq_err_lie1}, and \eqref{eq_err_lie2}, we define the overall error state vector at time $k$ as
\begin{align}\label{eq_err}
\widetilde{\mathbf x}_k&\triangleq\begin{bmatrix}
\widetilde{\mathbf x}_{I_k}^\top &\widetilde{\mathbf x}_{C}^\top 
\end{bmatrix}^\top \in \mathbb R^{18+6m}\nonumber\\
\widetilde{\mathbf x}_{I_k}&=
\begin{bmatrix}
\bm \xi_{I_k}^\top & \widetilde{\mathbf B}_k^\top 
\end{bmatrix}^\top, \quad \widetilde{\mathbf x}_{C}=\begin{bmatrix}
\bm\xi_{c_{k-1}}^\top & \cdots & \bm\xi_{c_{k-m}}^\top    
\end{bmatrix}^\top
\end{align}
\subsection{System Model}
\noindent\textbf{System motion model:}
In the proposed CVIRO, we adopt the general 3-D rigid-body kinematics as the system motion model, given by:
\begin{align}\label{eq_imu_kn}
{^{G}_{I_{k}}\dot{\mathbf R}}&={^{G}_{I_{k}}{\mathbf R}}\lfloor {^{I_{k}}\bm\omega} \times\rfloor,\quad
{^G{\dot{\mathbf v}}_{I_{k}}}={^{G}_{I_{k}}{\mathbf R}} ({^{I_{k}}\mathbf a})+{^G\mathbf g}\nonumber\\
{^G{\dot{\mathbf p}}_{I_{k}}}&={^G{\mathbf v}_{I_{k}}},\quad
{^G{\dot{\mathbf p}}_{u}}=\mathbf 0,\quad
\dot{\mathbf b}_{\omega_{k}}=\mathbf w_{\omega}, \quad \dot{\mathbf b}_{a_{k}}=\mathbf w_{a}
\end{align}
where $^G \mathbf g=\begin{bmatrix}0&0&-9.8
\end{bmatrix}^\top$ denotes the gravity vector.
In particular, the IMU sensor can measure the robot's acceleration ${^{I_{k}}\mathbf a}$ and angular velocity ${^{I_{k}}\bm\omega}$ with respect to the IMU's own frame $I_k$ at each timestep $k$ as
\begin{align}
{{^{I_{k}}\mathbf a}_m}&={^{I_{k}}\mathbf a}-{^{G}_{I_{k}}{\mathbf R}}^\top {^G \mathbf g}+ {\mathbf n}_{a_{k}}+\mathbf b_{a_{k}}\nonumber\\
{^{I_{k}}\bm\omega}_m&={^{I_{k}}\bm\omega}+\mathbf n_{\omega_{k}}+\mathbf b_{\omega_{k}}
\end{align}
where ${{^{I_{k}}\mathbf a}_m}$ and ${^{I_{k}}\bm\omega}_m$ are the noisy measurements of the robot's acceleration and angular velocity, respectively.
$\mathbf n_{a_k}$ and $\mathbf n_{\omega_k}$ are the IMU's measurement noises assumed to be zero-mean Gaussian. $\mathbf b_{\omega_{k}}$ and $\mathbf b_{a_{k}}$ are the IMU's biases which are modeled as random walk process, where their time derivatives follow a Gaussian distribution.

\noindent\textbf{Camera measurement model:}
When a static landmark of the environment, denoted as ${^{G}\mathbf p_f}$, is tracked by the camera at timestep $k$, the corresponding feature measurement can be obtained through the following model
\begin{align}
\mathbf z_{c_k}&=\prod({^{C_k}\mathbf p_f})+\mathbf n_{c_k}\nonumber\\
{^{C_k}\mathbf p_f}&={^{C}_{I} \mathbf R}{^{I_k}_G \mathbf R}({^{G}\mathbf p_f}-{^{G}\mathbf p_{I_k}})+{^{C}\mathbf p_{I}}
\end{align}
where $\mathbf n_{c_k}\sim \mathcal N(0, \mathbf Q_c)$ is the measurement noise which is assumed to be white Gaussian with covariance $\mathbf Q_c$. The ${^{C_k}\mathbf p_f}$ is the landmark position in the camera frame, and the projection function $\prod(\cdot)$ is defined as $\prod(\begin{bmatrix}x&y&z\end{bmatrix}^\top)=\begin{bmatrix}x/z&y/z
\end{bmatrix}^\top$

\noindent\textbf{Range measurement model:}
The UWB tag on the robot can provide the ranging measurement 
between the robot and the UWB anchor at time $t_k$, denoted as $d_{k}$. Given the robot's pose $({^{I_k}_G{\mathbf R}}, ^G\mathbf p_{I_k})$ and the anchor position ${^G\mathbf p_{u}}$, the ranging measurement is described by the following model:
\begin{align}\label{eq_uwb}
\mathbf z_{u_k}&=\|^G\mathbf p_{I_k}+{^{I_k}_G{\mathbf R}}^\top{^{I}\mathbf p_T}-{^G\mathbf p_{u}}\|+\mathbf n_{u_k} + \mathbf b_{u_k}
\end{align}
where $\mathbf n_{u_k}\sim \mathcal N(0, \mathbf Q_u)$ is the measurement noise; ${^{I}\mathbf p_T}$, $\mathbf b_{u_k}$ represent UWB tag's position in the IMU frame and the measurement bias, respectively. Note that the terms, ${^{I}\mathbf p_T}$ and $\mathbf b_{u_k}$ can be easily calibrated offline \cite{ntuviral}.

\section{Algorithm Design}
\subsection{IMU Propagation}
Consider a posterior estimate at time $t_k$, denoted as $(\hat{\mathbf X}_k, \hat{\mathbf P}_{k})$. We propagate the system state $\hat{\mathbf X}_k$ forward to compute a prior estimate at time $t_{k+1}$, denoted as $(\hat{\mathbf X}_{k+1|k}, \hat{\mathbf P}_{k+1|k})$, by integrating the IMU measurements between time $t_k$ and time $t_{k+1}$. 
To propagate the state covariance, we first linearize the continuous-time kinematics \eqref{eq_imu_kn} at $\hat {\mathbf X}_k$ to compute a linearized error-state model as
\begin{align}\label{eq_lin}
\frac{d}{dt}{\widetilde{\mathbf x}}_{k}&=
\begin{bmatrix}
\mathbf F_k & \mathbf 0_{18\times 6m}\\
\mathbf 0_{6m\times18} & \mathbf 0_{6m}
\end{bmatrix}
{\widetilde{\mathbf x}}_{k}+
\begin{bmatrix}
\mathbf G_k \\ \mathbf 0_{6m\times18}
\end{bmatrix}\mathbf n_k,
\end{align}
where $\mathbf n_k=\begin{bmatrix}
\mathbf n_{\omega_k}^\top & \mathbf n_{a_k}^\top & \mathbf 0_{1\times 6}^\top & \mathbf w_{\omega}^\top & \mathbf w_{a}^\top
\end{bmatrix}^\top$; The jacobian $\mathbf F_k$ can be computed as
\begin{align}
\mathbf F_k&=
\begin{bmatrix}
\mathbf F_A & \mathbf F_{B_k}\\
\mathbf 0_{6\times 12} & \mathbf 0_{6}
\end{bmatrix}, \quad
\mathbf F_{A}=\begin{bmatrix}
\mathbf 0_3 & \mathbf 0_3 & \mathbf 0_3 & \mathbf 0_3\\
\lfloor^G\mathbf{g}\times\rfloor & \mathbf 0_3 & \mathbf 0_3 & \mathbf 0_3 \\
\mathbf 0_3 & \mathbf{I}_3 & \mathbf 0_3 & \mathbf 0_3 \\
\mathbf 0_3 & \mathbf 0_3 & \mathbf 0_3 & \mathbf 0_3 \\
\end{bmatrix}\nonumber\\
\mathbf F_{B_{k}}&=\begin{bmatrix}
-{^G_{I_{k}} \hat{\mathbf R}} & \mathbf 0_3\\
-\lfloor{^G\hat{\mathbf v}_{I_{k}} \times}\rfloor{^G_{I_{k}} \hat{\mathbf R}} & -{^G_{I_{k}} \hat{\mathbf R}} \\
-\lfloor{^G\hat{\mathbf p}_{I_{k}} \times}\rfloor{^G_{I_{k}} \hat{\mathbf R}} & \mathbf 0_3 \\
-\lfloor{^G\hat{\mathbf p}_{u} \times}\rfloor{^G_{I_{k}} \hat{\mathbf R}} & \mathbf 0_3
\end{bmatrix}
\end{align}
and the jacobian $\mathbf G_k$ can be computed as
\begin{align}
\mathbf G_k&=
\begin{bmatrix}
\mathbf{Ad}_{\hat {\mathbf X}_{k}} & \mathbf 0_{12\times 6}\\
\mathbf 0_{6\times 12} & \mathbf 0_6
\end{bmatrix}\nonumber\\
\mathbf{Ad}_{\hat {\mathbf X}_{k}}&=\begin{bmatrix}
{^{G}_{I_{k}}\hat{\mathbf R}}& \mathbf 0_3 & \mathbf 0_3&\mathbf 0_3 \\
\lfloor{^G\hat{\mathbf v}_{I_{k}}\times \rfloor{^{G}_{I_{k}}}\hat{\mathbf R}}& {^{G}_{I_{k}}\hat{\mathbf R}} &\mathbf 0_3 & \mathbf 0_3 \\
\lfloor{^G\hat{\mathbf p}_{I_{k}}\times \rfloor{^{G}_{I_{k}}}\hat{\mathbf R}}& \mathbf 0_3& {^{G}_{I_{k}}\hat{\mathbf R}} & \mathbf 0_3 \\
\lfloor{^G\hat{\mathbf p}_{u}\times \rfloor{^{G}_{I_{k}}}\hat{\mathbf R}}& \mathbf 0_3& \mathbf 0_3 & {^{G}_{I_{k}}\hat{\mathbf R}}
\end{bmatrix}
\end{align}
Given the linearized system in \eqref{eq_lin}, we can propagate the covariance as 
\begin{align}
\hat{\mathbf P}_{k+1|k} &= \bar{\bm\Phi}_{k+1|k}\hat{\mathbf P}_{k|k}\bar{\bm\Phi}_{k+1|k}^\top + \mathbf Q
\end{align}
where $\mathbf Q$ denotes the noise covariance matrix, and the discrete-time state transition matrix from time $t_{k}$ to $t_{k+1}$ can be computed as
\begin{align}
\bar{\bm\Phi}_{k+1|k}&=\text{diag}({\bm\Phi}_{k+1|k}, \mathbf I_{6m}),\quad {\bm\Phi}_{k+1|k}=\begin{bmatrix}
\bm\Phi_{A}& \bm\Phi_{B_{k}} \\
\mathbf 0_{6\times 12} & \mathbf I_{6}
\end{bmatrix}.
\end{align}
For the convenience of readers, we provide a more detailed derivation of the IMU propagation in the {\textbf{Section 2}} of the supplementary material \cite{zzSPLY2025}.

\subsection{Range Update}
When the robot's UWB tag receives the range measurement from the anchor, this measurement will be employed to update the system state, which includes both the robot's state and the UWB anchor's state. To perform the range update, we first linearize the range measurement model \eqref{eq_uwb} by using the first-order Taylor expansion. Notably, unlike the standard Taylor expansion represented with the standard vector error, we have to linearize \eqref{eq_uwb} with the right-invariant error $\widetilde{\mathbf x}_{I_k}$ in \eqref{eq_err}. Recall that $\exp_{\mathcal G}(\bm \xi_{I_k})\approx \mathbf I_6 +\bm \xi_{I_k}^\wedge$ 
in \eqref{eq_logln}, we have the following linearized model at the linearization point $\hat{\mathbf X}_k$ as
\begin{align}\label{eq_ln_uwb}
\widetilde{\mathbf z}_{u_k}&= \mathbf H_{u_k} \widetilde{\mathbf x}_{I_k} + \mathbf n_{u_k}
\end{align}
where the measurement jacobian $\mathbf H_{u_k}$ is given by
\begin{align}
\mathbf H_{u_k}&= \mathbf H_{pu}\begin{bmatrix}
\Lambda_k & \mathbf 0_3 & \mathbf I_3 & -\mathbf I_3 & \mathbf 0_{3\times 6}
\end{bmatrix}\nonumber\\
\mathbf H_{pu}&=\frac{ \left({^G \hat{\mathbf p}_{I_k}}-{^G \hat{\mathbf p}_u}+{^G_{I_k} \hat{ \mathbf R}}\, {^I\mathbf p_T} \right)^\top}{\|  {^G \hat{\mathbf p}_{I_k}}-{^G \hat{\mathbf p}_u}+{^G_{I_k} \hat{ \mathbf R}}\, {^I\mathbf p_T}\|} \nonumber\\
\Lambda_k&= \lfloor{ {^G \hat{\mathbf p}_u}-{^G \hat{\mathbf p}_{I_k}}-{^G_{I_k} \hat{ \mathbf R}}\, {^I\mathbf p_T}\times\rfloor}.
\end{align}
More detailed derivation are given in the \textbf{Section 3} of the supplementary material \cite{zzSPLY2025}.

\subsection{Visual Update}
Similar to the derivation of range update, to linearize the camera measurement model at $\hat {\mathbf X}_k$, we perform the first-order Taylor expansion to the above function as
\begin{align}
\widetilde{\mathbf z}_{c_k}&= \mathbf H_{x_k} \widetilde{\mathbf x}_{I_k} + \mathbf H_{f_k} {^G\widetilde{\mathbf p}}_{f} + \mathbf n_{c_k}
\end{align}
where the measurement jacobians $\mathbf H_{x_k}$ and $\mathbf H_{f_k}$ can be computed as
\begin{align}
\mathbf H_{x_k}&= \mathbf H_{pc}{^C_{I}\hat{\mathbf R}}{^{I_k}_G\hat{\mathbf R}}\begin{bmatrix} \mathbf 0_3 & \mathbf 0_3 & -\mathbf I_3 & \mathbf 0_{3\times 9}
\end{bmatrix}\nonumber\\
\mathbf H_{f_k}&=\mathbf H_{pc}{^C_{I}\hat{\mathbf R}}{^{I_k}_G\hat{\mathbf R}}, \quad
\mathbf H_{pc}=\begin{bmatrix}
1/\hat{z}& 0 & -\hat x/\hat{z}^2\\
0 & 1/\hat{z} & -\hat y/\hat{z}^2
\end{bmatrix}.
\end{align}
Since the feature $^G \mathbf p_f$ is not included in the state, we project the residuals onto the left null space of $\mathbf H_{f_k}$ as the following to perform update
\begin{align}
\widetilde{\mathbf z}_{c_k}^\prime&=\mathbf L_f^\top\widetilde{\mathbf z}_{c_k}=\mathbf L_f^\top \mathbf H_{x_k} \widetilde{\mathbf x}_{I_k} + \mathbf L_f^\top\mathbf n_{c_k}
\end{align}
where the columns of matrix $\mathbf L_f$ forms a basis of
the left nullspace $\mathbf H_{f_k}$.

\subsection{UWB state augmentation and initialization}
Since the UWB anchor positions are unknown and need to be estimated in the proposed CVIRO framework, one crucial task is to initialize the UWB state as well as the estimated covariance. Specifically, when the system starts to run, the state estimate $\hat{\mathbf T}_k$ only contains the IMU state without the anchor position ${^G \hat{ \mathbf p}_u}$. We assume the IMU state, as obtained through the VIO system, along with the ranging measurement, are readily accessible and stored in a time window of length $n$. To initialize the anchor state, the formulate the following optimization problem
\begin{align}
\min_{^G \mathbf p_u}\sum_{k=1}^n\left(\mathbf z_{u_k} -\mathbf b_{u_k}- \| {^G\mathbf p_{T_k}}-{^G\mathbf p}_{u}\|\right)^2,
\end{align}
where ${^G\mathbf p_{T_k}}={^G\mathbf p}_{I_k}+ {^{I_k}_G \mathbf R^\top} {^I\mathbf p}_{T}$ denotes the UWB tag's position with respect to the IMU frame $I_k$.

To initialize the covariance of the UWB state as well as its correlations with existing states, we adopt a method closely resembling the "state variable initialization" described in \cite{OVS}. We first define a state $\mathbf X_{pos}$ that contains the robot poses from timestep $k=0$ to $k=n$ as 
\begin{align}\mathbf X_{pos}=
\begin{bmatrix}
{^G_{I_0}\mathbf R}^\top &{^G\mathbf p}_{I_0}^\top & \cdots &{^G_{I_n}\mathbf R}^\top &{^G\mathbf p}_{I_n}^\top
\end{bmatrix}^\top,
\end{align}
and then stack all the available UWB measurements $\mathbf z_u=\begin{bmatrix}
\mathbf z_{u_0}, \cdots, \mathbf z_{u_n}
\end{bmatrix}^\top$ to construct a stacked measurement model as
\begin{align}
\mathbf z_u&=\mathbf h(\mathbf X_{pos}, {^G \mathbf p}_{u})
\end{align}
where each $\mathbf z_{u_k}$ satisfies the model \eqref{eq_uwb}. By linearizing the above measurement model, we have
\begin{align}
\widetilde{\mathbf z}_{u}&=\begin{bmatrix}
\mathbf H_A & \mathbf H_B 
\end{bmatrix}\begin{bmatrix}
\widetilde{\mathbf x}_{pos} \\ \bm \xi_{u_k}
\end{bmatrix}+\bar{\mathbf n}_u
\end{align}
where $\bar{\mathbf n}_u$ represents the stacked noise vector and $\widetilde{\mathbf x}_{pos}$ denotes the error state of ${\mathbf x}_{pos}$ given by
\begin{align}
\widetilde{\mathbf x}_{pos}=\begin{bmatrix}
\bm \xi_{\theta_0}^\top & \bm \xi_{p_0}^\top & \cdots & \bm \xi_{\theta_n}^\top & \bm \xi_{p_n}^\top
\end{bmatrix}^\top.
\end{align}
$\mathbf H_A$ and $\mathbf H_B$ are the corresponding measurement jacobians, which can be computed in a manner similar to the Jacobian $\mathbf H_u$ in \eqref{eq_ln_uwb}.
We perform QR decomposition to separate the above linearized system as two subsystems as
\begin{align}
\begin{bmatrix}
\widetilde{\mathbf z}_{u_1} \\ \widetilde{\mathbf z}_{u_2}
\end{bmatrix}&=\begin{bmatrix}
\mathbf H_{A_1} & \mathbf H_{A_2} \\ \mathbf H_{B_1} & \mathbf 0
\end{bmatrix} \begin{bmatrix}
\widetilde{\mathbf x}_{pos} \\ \bm \xi_{u_k}
\end{bmatrix} + \begin{bmatrix}
\bar{\mathbf n}_{u_1}\\ \bar{\mathbf n}_{u_2}
\end{bmatrix}
\end{align}
Then, the covariance of the UWB state and its correlations to the existing state $\mathbf X_{pos}$ can be computed and used to augmented to the current covariance following \cite{OVS}.


\section{Consistency Analysis}
\subsection{Observability Analysis}\label{ob_ana}
Observability is fundamental for analyzing system consistency, as it determines whether the system can fully recover its initial state using all available measurements. According to \cite{LIVIO2015}, the observability of the MSCKF system can be analyzed through the EKF-SLAM framework, as the two are theoretically equivalent. Thus, without loss of generality, we study the observability of the EKF-SLAM system with a single feature point $^G\mathbf{p}_{f}$ and one UWB anchor $^G\mathbf{p}_{u}$. The state of the EKF-SLAM system has a form very similar to $\mathbf{X}_{I_k}$ in \eqref{eq_state}, with the only difference being that the original state $\mathbf{T}_k$ is now augmented with a feature state, given by
\begin{align}
\mathbf T_{k}&=\left[
\begin{array}{c|cccc}
{^{G}_{I_{k}}\mathbf{R}} & ^G\mathbf{v}_{I_{k}} & ^G\mathbf{p}_{I_{k}} & ^G\mathbf{p}_{u} & ^G\mathbf{p}_{f}\\
\hline
\mathbf 0_{4\times 3}&\multicolumn{4}{c}{\mathbf I_{4}}
\end{array}
\right]\in\ SE_{4}(3), 
\end{align}
Note that, based on the definition of the Special Euclidean Group, this analysis can be easily extended to the multi-feature and multi-anchor case by augmenting the system state accordingly while preserving the group structure. Following the proof of \cite{FEJ}, the local observability matrix for the time-varying error state of the whole system is defined as
\begin{align}\label{eq_ob}
    \mathcal O&=\begin{bmatrix}
    \mathcal O_0\\\mathcal O_1\\ \vdots \\ \mathcal O_k
    \end{bmatrix}=\begin{bmatrix}
    \mathbf H_0 \\
    \mathbf H_1 \widetilde{\bm\Phi}_{1|0}\\
    \vdots\\
    \mathbf H_k \widetilde{\bm\Phi}_{k|0}
    \end{bmatrix},    
\end{align}
where $\mathbf H_k=\begin{bmatrix} \mathbf H_{c_k}^\top& \mathbf H_{r_k}^\top \end{bmatrix}^\top$ is the joint measurement jacobian; $\mathbf H_{c_k}$ represents the jacobian of the visual measurement, and $\mathbf H_{r_k}$ represents the jacobian of the range measurement
\begin{align}
\mathbf H_{c_k}&=\mathbf H_{f_k}\begin{bmatrix} \mathbf 0_3 & \mathbf 0_3 & -\mathbf I_3 & \mathbf 0_3 & \mathbf I_3 & \mathbf 0_{3\times 6}
\end{bmatrix}\nonumber\\
\mathbf H_{r_k}&=\mathbf H_{pu}\begin{bmatrix} \Lambda_k & \mathbf 0_3 & \mathbf I_3 & -\mathbf I_3 & \mathbf 0_3 & \mathbf 0_{3\times 6}
\end{bmatrix},
\end{align}
$\widetilde{\bm\Phi}_{k,0}$ is the state transition matrix of the EKF-SLAM system.
Then we can compute each block row $\mathcal O_k$ of the observability matrix as
\begin{align}
\mathcal O_k&\triangleq\begin{bmatrix}
\mathcal O_{c_k}\\\mathcal O_{r_k}
\end{bmatrix}=\begin{bmatrix}
\mathbf H_{c_k} \widetilde{\bm\Phi}_{k,0}\\ \mathbf H_{r_k} \widetilde{\bm\Phi}_{k,0}
\end{bmatrix}\nonumber\\
\mathcal O_{c_k}&=\mathbf H_{f_k}\begin{bmatrix}
\mathbf M_{c,1} & -\mathbf I_3 \delta t & -\mathbf I_3 & \mathbf 0_3 & \mathbf I_3 & \mathbf M_{c,2} & \mathbf M_{c,3}
\end{bmatrix}\nonumber\\
\mathcal O_{r_k}&=\mathbf H_{pu}\begin{bmatrix}
\mathbf M_{r,1} & \mathbf I_3 \delta t & \mathbf I_3 & -\mathbf I_3 & \mathbf 0_3 & \mathbf M_{r,2} & \mathbf M_{r,3}
\end{bmatrix},
\end{align}
where 
\begin{align}
\mathbf M_{c,1}&=-\frac{1}{2}\lfloor^G \mathbf g \times\rfloor \delta t^2,\quad \mathbf M_{c,2}={\bm{ \widetilde \Phi}}_{56}-{\bm{ \widetilde \Phi}}_{36}\nonumber\\
\mathbf M_{c,3}&=-{\bm{ \widetilde \Phi}}_{37},\quad \mathbf M_{r,1}=\Lambda_k+\frac{1}{2}\lfloor^G \mathbf g \times\rfloor \delta t^2,\nonumber\\
\mathbf M_{r,2}&=\Lambda_k{\bm{ \widetilde \Phi}}_{16}+{\bm{ \widetilde \Phi}}_{36}-{\bm{ \widetilde \Phi}}_{46},\quad \mathbf M_{r,3}={\bm{ \widetilde \Phi}}_{37}
\end{align}
and $\widetilde{\bm \Phi}_{ij}$ are sub-block matrices of the state transition matrix $\widetilde{\bm \Phi}_{k,0}$. The exactly forms of the sub-block matrix $\widetilde{\bm \Phi}_{ij}$ are given in the \textbf{Section 4} of the supplementary material \cite{zzSPLY2025}.

\begin{theorem}\label{OBA}\longthmtitle{Observability properties of CVIRO}
\textit{The right nullspace of the observability matrix $\mathcal O_k$, denoted as $\mathbf N$, spans the following unobservable subspace
\begin{align}
\mathbf N=\begin{bmatrix}
{^G \mathbf g}^\top & 0 & 0 & 0 & 0 & 0 & 0\\
\mathbf 0_3 & \mathbf 0_3 & \mathbf I_3 & \mathbf I_3 & \mathbf I_3 & \mathbf 0_3 &\mathbf 0_3
\end{bmatrix}^\top
\end{align}
which denotes the global orientation and yaw translation.}
\end{theorem}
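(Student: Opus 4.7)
\section*{Proof plan for Theorem \ref{OBA}}

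The plan is to verify the claim by direct substitution: show that each of the four columns of $\mathbf{N}$ is annihilated by the $k$-th observability block $\mathcal{O}_k$ (which splits as $[\mathcal{O}_{c_k}^\top,\mathcal{O}_{r_k}^\top]^\top$), and then argue that no additional independent nullspace direction can exist. Since this identity must hold uniformly in $k$, it will follow that $\mathbf{N}$ lies in the right nullspace of the full $\mathcal{O}$ stacked over all timesteps.

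First, I would handle the three translation columns, i.e.\ the block $[\mathbf{0}_3,\mathbf{0}_3,\mathbf{I}_3,\mathbf{I}_3,\mathbf{I}_3,\mathbf{0}_3,\mathbf{0}_3]^\top$. Inserting into $\mathcal{O}_{c_k}$ the only surviving contributions come from the IMU position, anchor position, and feature position blocks, giving $\mathbf{H}_{f_k}(-\mathbf{I}_3+\mathbf{0}_3+\mathbf{I}_3)=\mathbf{0}$. The same substitution into $\mathcal{O}_{r_k}$ yields $\mathbf{H}_{pu}(\mathbf{I}_3-\mathbf{I}_3+\mathbf{0}_3)=\mathbf{0}$. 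This confirms the three ``global translation'' directions are unobservable, reflecting the fact that shifting IMU, anchor, and feature positions by a common vector leaves every visual and ranging residual invariant.

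The yaw column $[{^G\mathbf{g}}^\top,0,\ldots,0]^\top$ is more delicate. For the camera block I would use $\mathcal{O}_{c_k}\mathbf{N}_{:,1}=\mathbf{H}_{f_k}\mathbf{M}_{c,1}{^G\mathbf{g}}=-\tfrac{1}{2}\delta t^2\,\mathbf{H}_{f_k}\lfloor{^G\mathbf{g}}\times\rfloor{^G\mathbf{g}}$, which vanishes because $\mathbf{v}\times\mathbf{v}=\mathbf{0}$. For the range block, letting $\mathbf{d}={^G\hat{\mathbf{p}}_{I_k}}-{^G\hat{\mathbf{p}}_u}+{^G_{I_k}\hat{\mathbf{R}}}{^I\mathbf{p}_T}$ so that $\Lambda_k=-\lfloor\mathbf{d}\times\rfloor$ and $\mathbf{H}_{pu}=\mathbf{d}^\top/\|\mathbf{d}\|$, we get $\mathcal{O}_{r_k}\mathbf{N}_{:,1}=\mathbf{H}_{pu}\Lambda_k{^G\mathbf{g}}=-\mathbf{d}^\top(\mathbf{d}\times{^G\mathbf{g}})/\|\mathbf{d}\|=0$ by the scalar triple product identity, while the $\lfloor{^G\mathbf{g}}\times\rfloor{^G\mathbf{g}}$ term again vanishes. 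This handles all four columns; physically, the $^G\mathbf{g}$ entry in the rotation slot corresponds to rotation about the gravity axis (yaw), the only rotation that the invariant-error parameterization does not couple to through gravity in $\mathbf{F}_A$.

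Finally I would argue dimension. The upper bound $\dim(\ker\mathcal{O})\ge 4$ follows from the four explicit columns above; for the matching lower bound I would inspect the columns of $\mathcal{O}_k$ corresponding to velocity, biases, and the ``non-yaw'' rotation directions, and show that across a sufficiently rich sequence of timesteps the stacked blocks have full column rank on the complement of $\mathbf{N}$. The main obstacle will be this last step: producing a clean rank argument that distinguishes the two non-yaw rotation directions (where $\lfloor{^G\mathbf{g}}\times\rfloor$ acts injectively through $\mathbf{M}_{c,1}$) from yaw, and that separates the IMU position, anchor position, and feature position blocks once their common shift has been removed—likely by selecting two timesteps with distinct $\delta t$ and distinct geometry $\mathbf{d}$, and then reading off the rank of the resulting block matrix. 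Once this is done, the unobservable subspace is exactly $\mathrm{span}(\mathbf{N})$, which coincides with the nullspace of the underlying continuous-time visual-inertial-ranging system and so establishes observability consistency of CVIRO.
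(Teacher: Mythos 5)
Your proposal is correct and follows the same route the paper intends: the paper defers the proof to its supplementary material, but given the block forms of $\mathcal O_{c_k}$ and $\mathcal O_{r_k}$ in Section V-A the argument is precisely your direct substitution, and your two key cancellations are exactly right --- $(-\mathbf I_3+\mathbf 0_3+\mathbf I_3)=\mathbf 0$ and $(\mathbf I_3-\mathbf I_3+\mathbf 0_3)=\mathbf 0$ for the three translation columns, and $\lfloor{^G\mathbf g}\times\rfloor{^G\mathbf g}=\mathbf 0$ together with $\mathbf H_{pu}\Lambda_k=-\mathbf d^{\top}\lfloor\mathbf d\times\rfloor/\|\mathbf d\|=\mathbf 0$ (an identity, even before multiplying by ${^G\mathbf g}$) for the gravity column, which is exactly the state-estimate--independence that makes the invariant parameterization consistent. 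The only piece you leave open, showing that the nullspace is no larger than four dimensions under generic motion, is likewise not established in the paper's main text; note also that in your final paragraph ``upper bound'' should read ``lower bound'' on $\dim(\ker\mathcal O)$.
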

\begin{proof}
See \textbf{Section 4} of the supplementary material for detailed derivations \cite{zzSPLY2025}.

\end{proof}
It can be concluded from Theorem \ref{OBA} that, unlike the standard EKF-based VIO system, where the unobservable space depends on the state estimate \cite{FEJ}, the nullspace $\mathbf N$ remains a constant matrix, independent of the system's state estimate. This nice property ensures that the CVIRO system naturally maintains the correct observability properties and does not introduce any spurious information along the unobservable subspace. Therefore, unlike standard EKF-based methods, which require artificially modifying the linearization point during estimation to enforce correct observability properties \cite{FEJ}, the CVIRO system is inherently observability consistent.

\section{Simulations and Experiments}
In this section, we conduct both Monte Carlo simulations and real-world experiments to evaluate the effectiveness and performance of the proposed CVIRO algorithm. The evaluation metrics used to assess estimation performance include Root Mean Square Error (RMSE), Absolute Trajectory Error (ATE) for accuracy and Normalized Estimation Error Squared (NEES) for consistency.

\begin{figure}[htbp]
    \centering
    \begin{subfigure}[b]{0.15\textwidth}
        \centering
        \includegraphics[width=\textwidth]{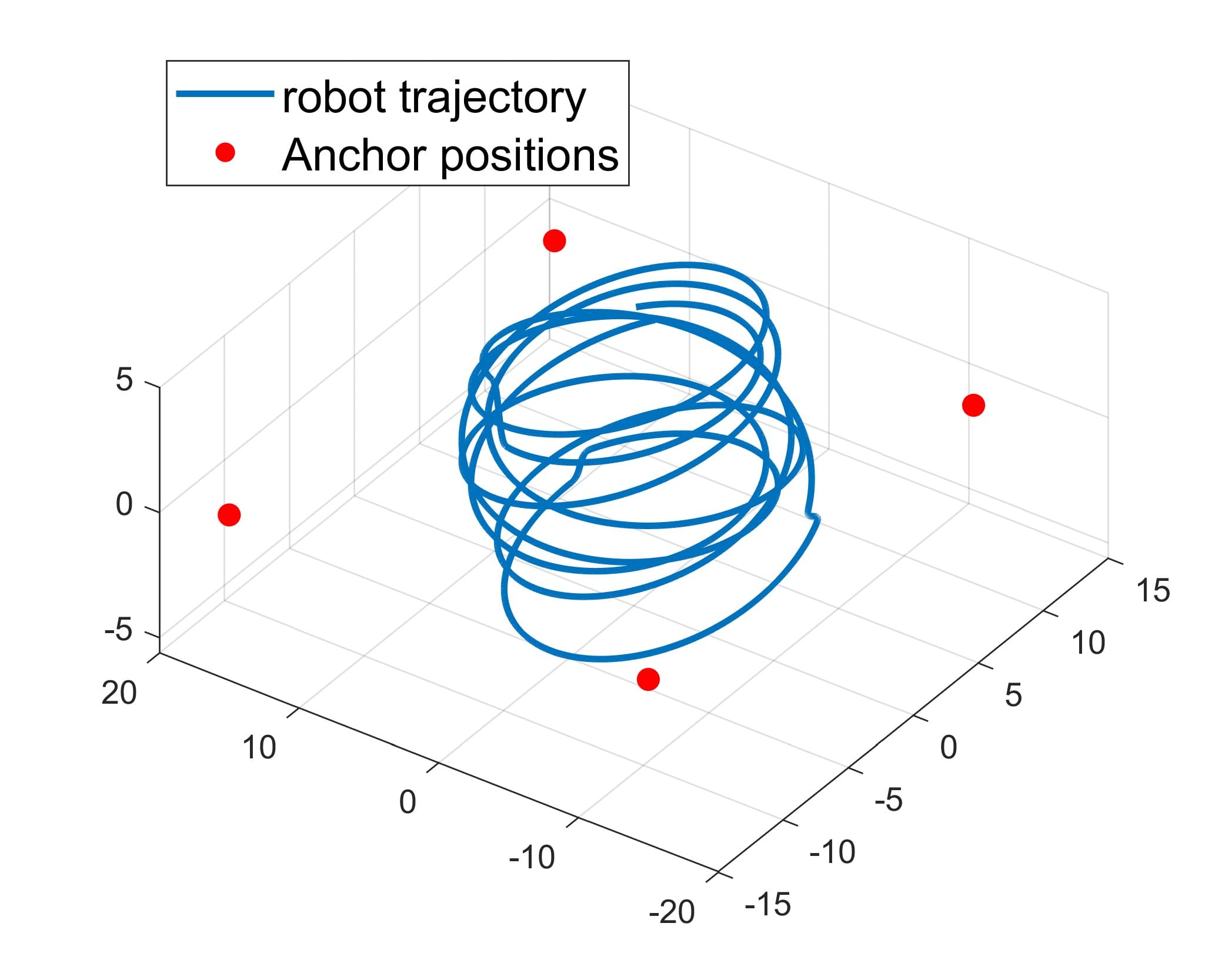}
        \caption{}
        \label{fig:traj1}
    \end{subfigure}
    \begin{subfigure}[b]{0.15\textwidth}
        \centering
        \includegraphics[width=\textwidth]{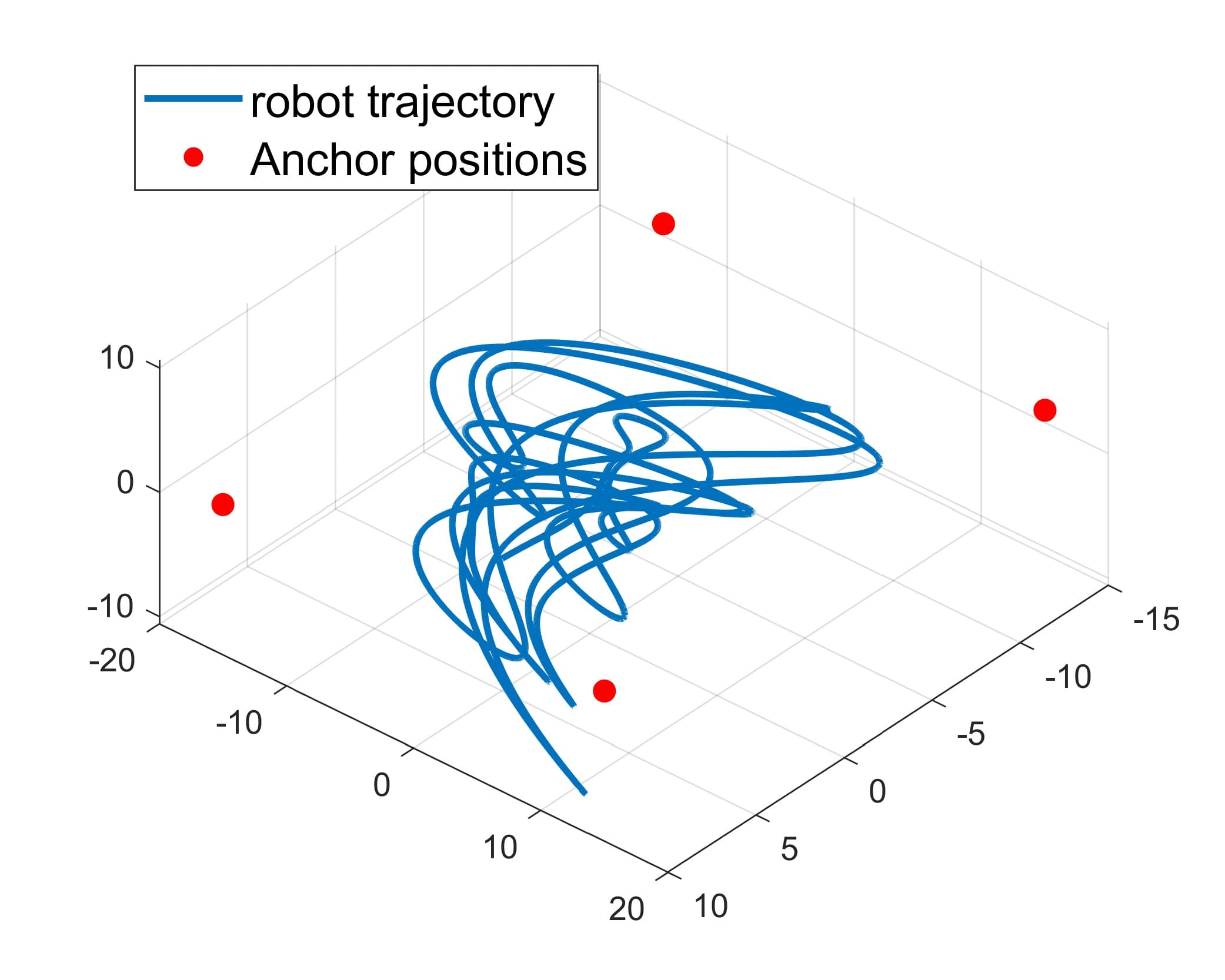}
        \caption{}
        \label{fig:traj2}
    \end{subfigure}
    \begin{subfigure}[b]{0.15\textwidth}
        \centering
        \includegraphics[width=\textwidth]{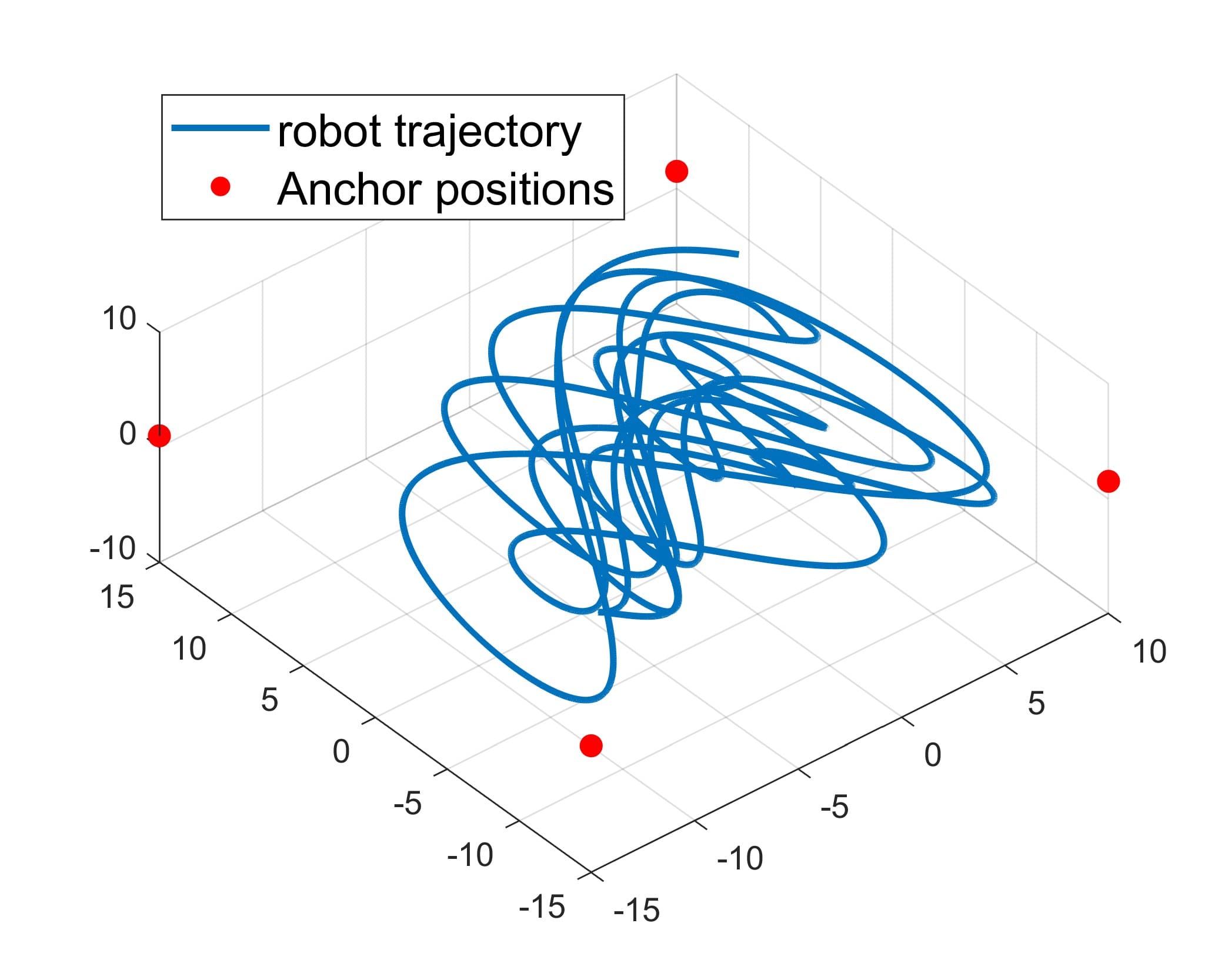}
        \caption{}
        \label{fig:traj3}
    \end{subfigure}
    \caption{Simulated trajectories with four UWB anchors placed in the environment. The lengths of the simulated trajectories are as follows: (a) 405 m, (b) 510 m, and (c) 542 m.}
    \label{fig_traj}
\end{figure}

\vspace{-1.5ex}
\begin{table}[h]
   \centering
   \caption{Simulation parameters used in Monte Carlo experiments.}
   \label{Sim_para}
   \begin{tabular}{cccc}
      \toprule[1.2pt]
      \textbf{Parameter} & \textbf{Value} & \textbf{Parameter} & \textbf{Value}  \\
      \midrule[1.2pt]
      Cam Noise (pixel) & 1 & UWB Noise (m) & 0.10 \\
      Gyro Noise ($\text{rad}/(\text{s}\sqrt{\text{Hz}})$) & 2.0e-3 & IMU freq. (\text{Hz}) & 100 \\
      Accel Noise ($\text{m}/(\text{s}^2\sqrt{\text{Hz}})$) & 3.0e-3 & Cam freq. (\text{Hz}) & 10 \\
      Gyro Bias ($\text{rad}/(\text{s}\sqrt{\text{Hz}})$) & 3.0e-4 & UWB freq. (\text{Hz}) & 10 \\
      Accel Bias ($\text{m}/(\text{s}^2\sqrt{\text{Hz}})$) & 3.0e-4 & Max Clones & 11 \\
      \bottomrule[1.2pt]
   \end{tabular}
\end{table}

\subsection{Numerical simulations}
In the simulations, we configure the robot to follow a pre-designed trajectory and place four UWB anchors around the environment as shown in Figure \ref{fig_traj}. Each UWB anchor measures the distance to the tag attached to the robot. Specifically, we compare our proposed method against three baseline estimators: (1) standard MSCKF-based VIO, which does not incorporate UWB measurements; (2) MSCKF-LG \cite{Heo2018}, a state-of-the-art Lie group-based VIO algorithm; and (3) FEJ-VIRO \cite{Jia2022}, a tightly-coupled VIO approach incorporating UWB measurements with the First-Estimate Jacobian technique. We select these three baseline estimators to comprehensively evaluate our proposed CVIRO method. The standard MSCKF-based VIO demonstrates the necessity of incorporating UWB measurements. MSCKF-LG serves as a state-of-the-art Lie group-based estimator that achieves observability consistency but without UWB integration. FEJ-VIRO represents a tightly-coupled method that integrates UWB measurements and addresses observability consistency through the First-Estimate Jacobian technique, enabling a fair comparison regarding both UWB fusion and consistency. We perform 50 runs of Monte Carlo simulations, with the simulation parameters detailed in Table \ref{Sim_para}. All algorithms are tested under identical parameters to ensure a fair comparison. Moreover, since our algorithm is based solely on MSCKF features and does not incorporate SLAM features, we disable SLAM feature tracking in the other algorithms (VIRO-FEJ) as well to maintain a fair evaluation.


The proposed CVIRO algorithm is evaluated in terms of both accuracy and consistency to comprehensively assess its performance across all three simulated trajectories. As illustrated in Figures \ref{fig:subfig2}, \ref{fig:subfig4}, and \ref{fig:subfig6}, our method, MSCKF-LG, and FEJ-VIRO are consistent, whereas the standard MSCKF-based VIO demonstrates clear inconsistency. The standard MSCKF is inconsistent because it fails to preserve the correct observability properties of the original system, whereas the other methods explicitly maintain observability consistency. Notably, our method and LG-MSCKF employ a Lie group-based state representation and leverage the invariant error property, inherently preserving the correct unobservable subspace of the original system, whereas FEJ-VIRO utilizes the First-Estimate Jacobian (FEJ) method to ensure observability consistency. Moreover, by directly incorporating the UWB state into the system state and performing estimation in a tightly-coupled manner, our method and FEJ-VIRO explicitly account for UWB uncertainties, thereby ensuring estimator consistency. In addition, our method and FEJ-VIRO demonstrate significant improvements in terms of accuracy by incorporating the UWB measurements to enhance the VIO localization performance as shown in Figure \ref{fig:subfig1}, \ref{fig:subfig3}, and \ref{fig:subfig5}.
We observe that our CVIRO algorithm and the FEJ-VIRO algorithm achieve comparable localization accuracy when the robot follows a smooth and less aggressive trajectory, as shown in Figure \ref{fig:traj1}. But, our method demonstrates superior performance during aggressive motion, as seen in Figures \ref{fig:traj2} and \ref{fig:traj3}. One possible reason for this discrepancy is that, as discussed earlier, although the FEJ method ensures observability consistency, it is prone to large initialization errors due to its reliance on the first-estimate Jacobian for linearization. If the initial linearization point is inaccurate, this can introduce additional estimation errors. When the robot undergoes aggressive motion, the quality of UWB initialization further degrades, potentially compromising the performance of the FEJ-VIRO algorithm.

\begin{figure}[htbp]
    \centering
    \begin{subfigure}[b]{0.23\textwidth}
        \centering
        \includegraphics[width=\textwidth]{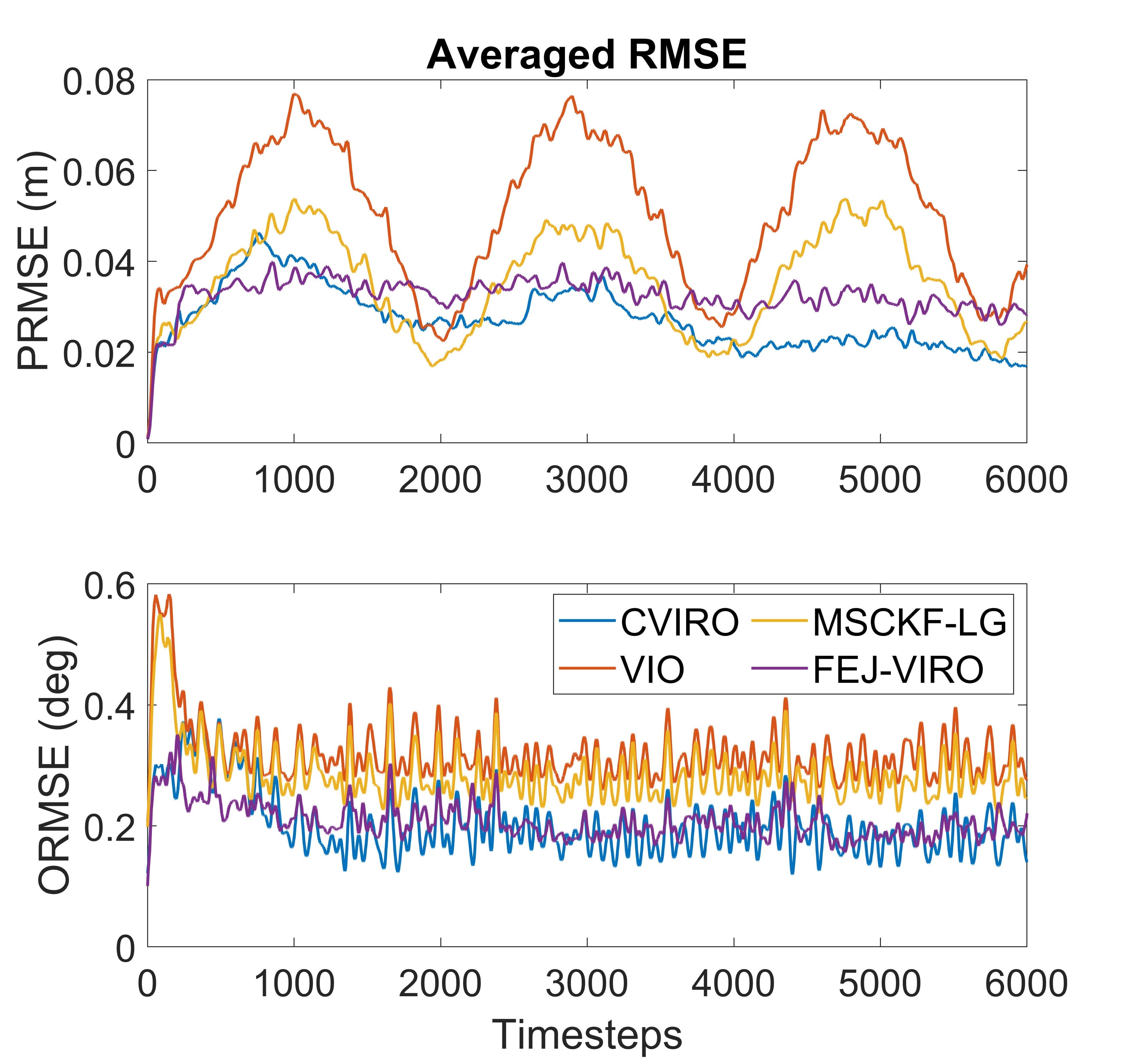}
        \caption{RMSE of traj. a}
        \label{fig:subfig1}
    \end{subfigure}
    \begin{subfigure}[b]{0.23\textwidth}
        \centering
        \includegraphics[width=\textwidth]{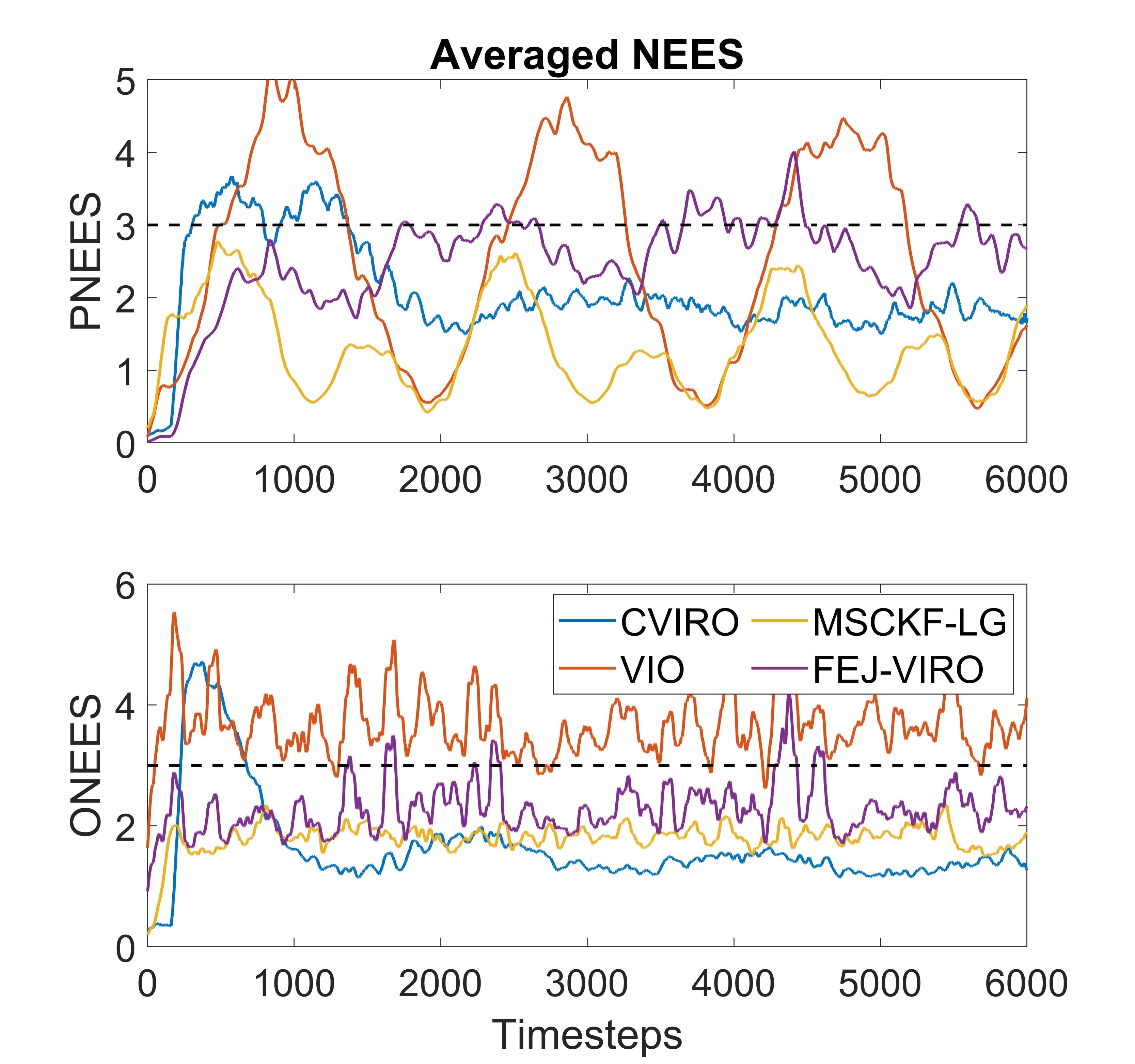}
        \caption{NEES of traj. a}
        \label{fig:subfig2}
    \end{subfigure}

    \begin{subfigure}[b]{0.23\textwidth}
        \centering
        \includegraphics[width=\textwidth]{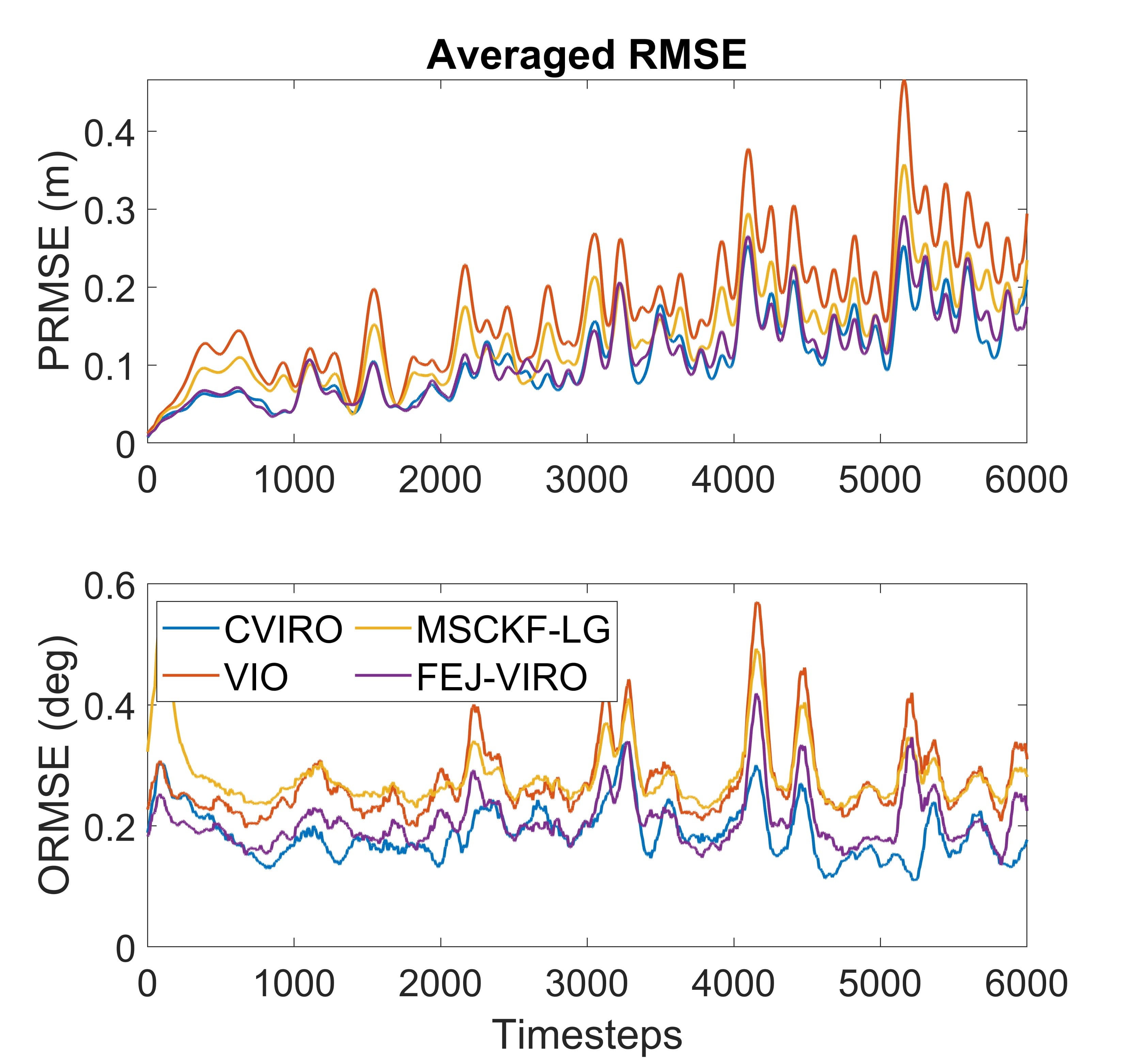}
        \caption{RMSE of traj. b}
        \label{fig:subfig3}
    \end{subfigure}
    \begin{subfigure}[b]{0.23\textwidth}
        \centering
        \includegraphics[width=\textwidth]{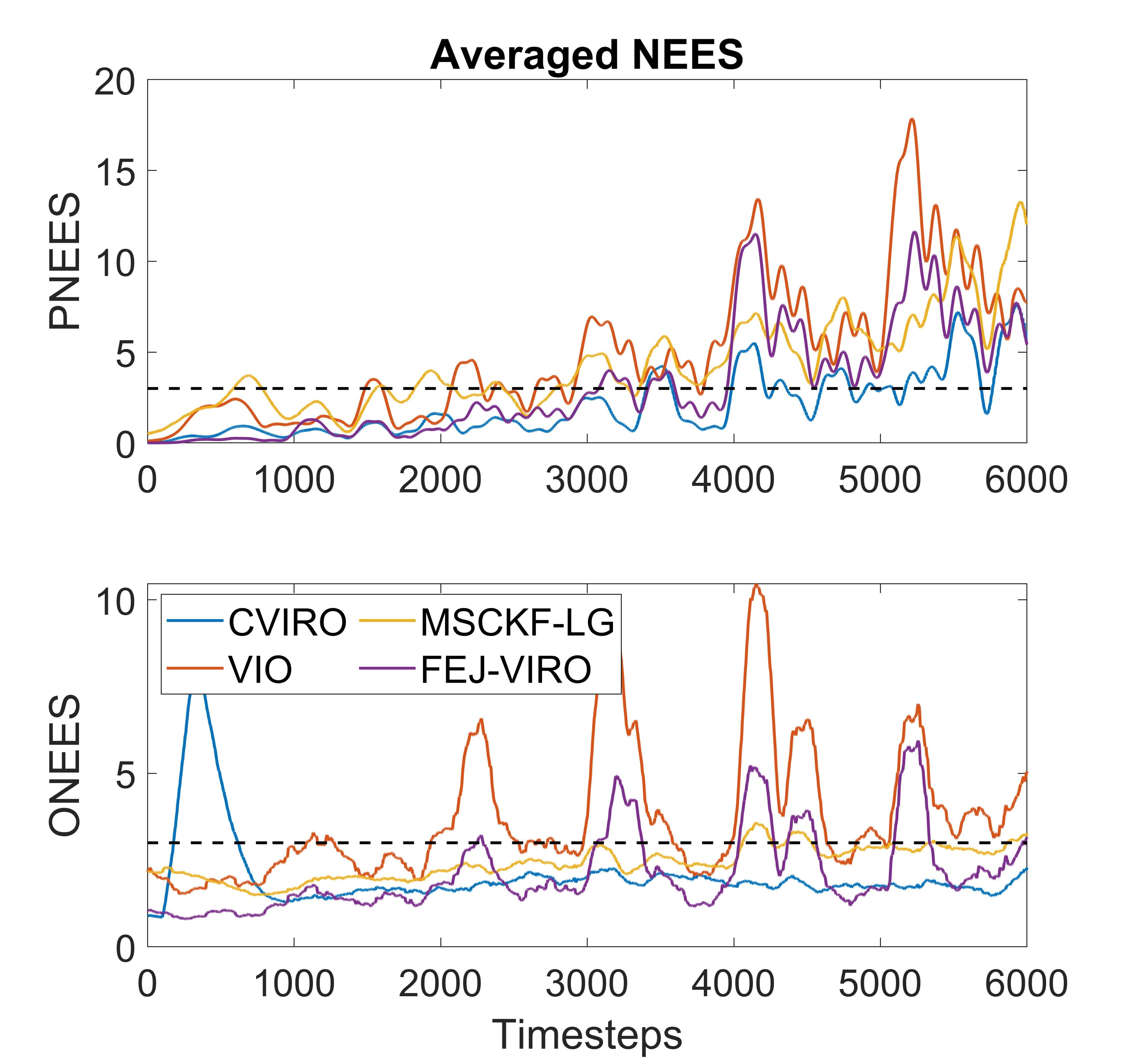}
        \caption{NEES of traj. b}
        \label{fig:subfig4}
    \end{subfigure}
    
    \begin{subfigure}[b]{0.23\textwidth}
        \centering
        \includegraphics[width=\textwidth]{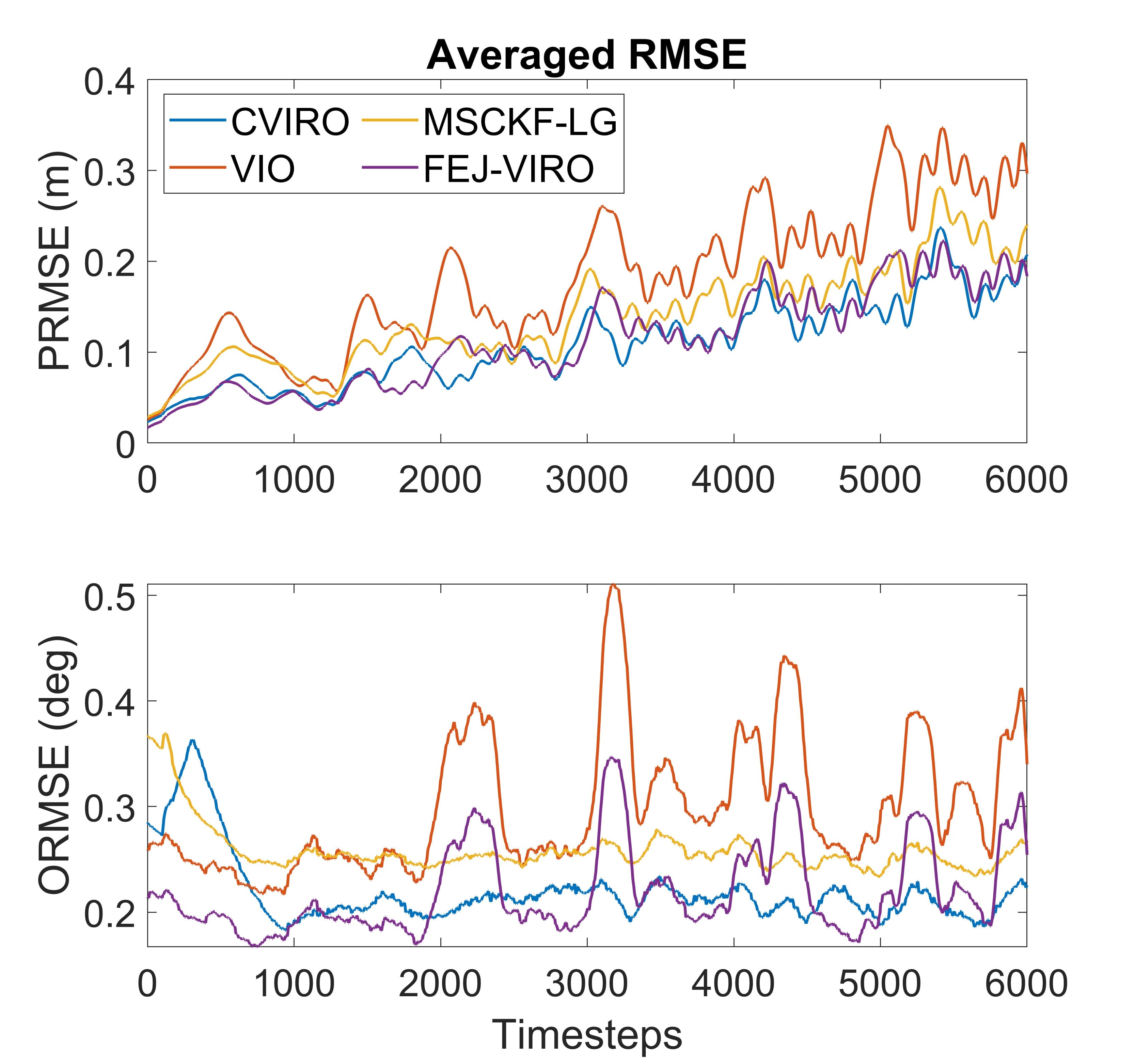}
        \caption{RMSE of traj. c}
        \label{fig:subfig5}
    \end{subfigure}
    \begin{subfigure}[b]{0.23\textwidth}
        \centering
        \includegraphics[width=\textwidth]{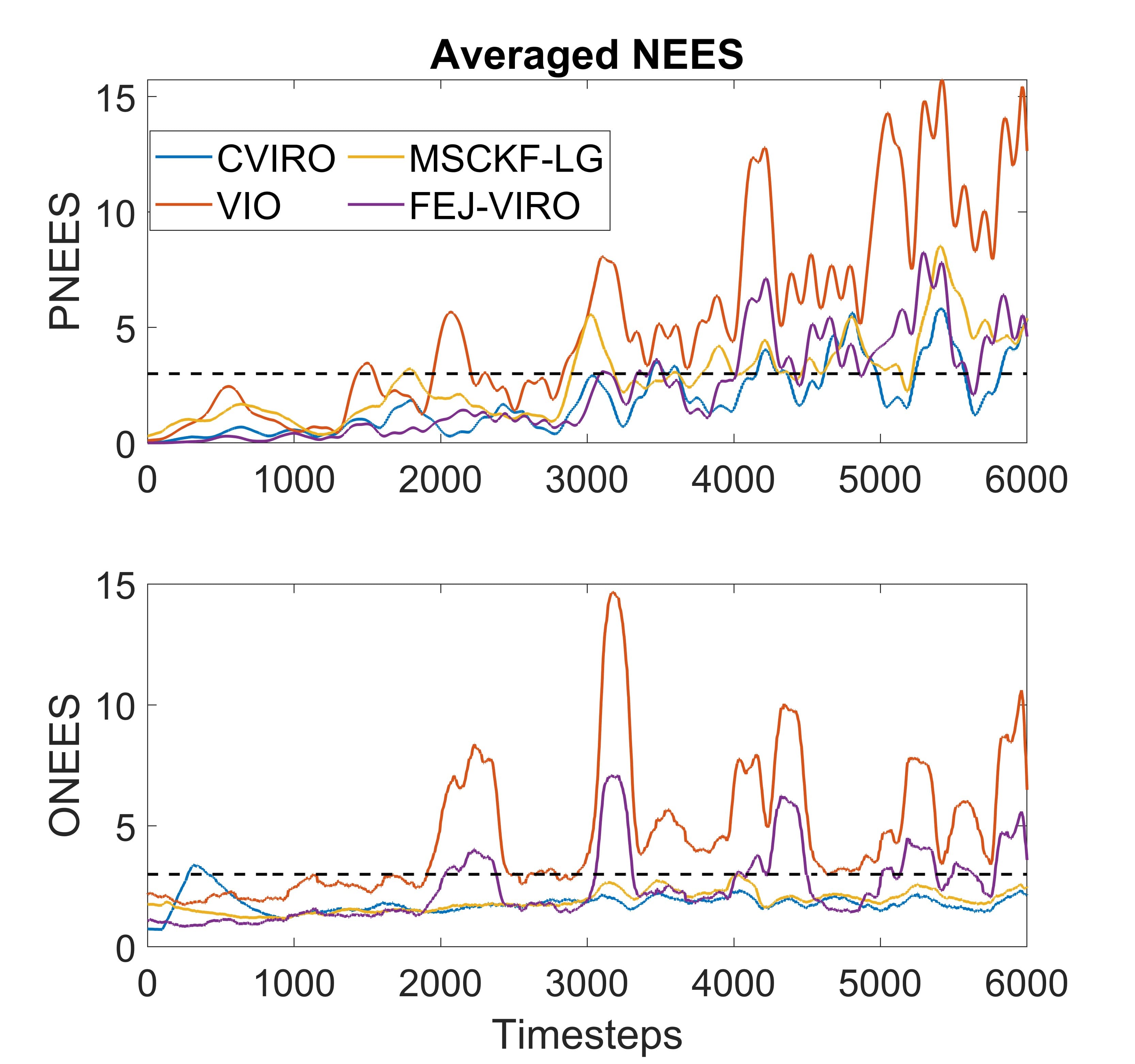}
        \caption{NEES of traj. c}
        \label{fig:subfig6}
    \end{subfigure}
    \caption{Averaged RMSE and NEES results of 50 Monte-Carlo simulation trails across all the simulated trajectories. For NEES, if the curve exceeds the dotted line at 3, it indicates that the algorithm is inconsistent. As observed, our algorithm remains below 3 in most cases, demonstrating a consistency advantage over the other methods.}
    \label{fig_traj}
\end{figure}

\begin{table}[t]
\vspace{-1ex}
\centering
\caption{Averaged RMSE and NEES errors over 50 Monte-Carlo runs.}
\begin{tabular}{*{6}{c}}
  \toprule[1.2pt]
  & Method & PRMSE & ORMSE& PNEES & ONEES\\
  \midrule[1.2pt]
  \multirow{4}*{Traj. a}
  &\textbf{CVIRO} &  0.027&  0.199& 2.456& 2.083\\
  &{FEJ-VIRO} &  0.029&  0.203& 2.635& 2.321\\
  & VIO &  0.054 &  0.288& 2.902& 3.815\\ 
  & MSCKF-LG &  0.040 &  0.268& 1.902& 2.215\\
  \midrule
  \multirow{4}*{Traj. b}
  &\textbf{CVIRO} &  0.110&  0.260& 2.872& 2.331\\
  &{FEJ-VIRO} &  0.145 &  0.262& 3.456& 2.677\\
  & VIO &  0.261 &  0.285& 4.102& 3.967\\ 
  & MSCKF-LG &  0.202 &  0.283& 3.105& 2.622\\
  \midrule
  \multirow{4}*{Traj. c}
  &\textbf{CVIRO} &  0.104&  0.245& 2.623& 2.159\\
  &{FEJ-VIRO} &  0.139 &  0.255 & 2.658 & 2.710\\
  & VIO &  0.273 &  0.255& 4.879 & 3.609\\ 
  & MSCKF-LG &  0.230 &  0.250 & 2.561& 2.265\\
  \bottomrule[1.2pt]
  \end{tabular}
\label{Tab_sim}
\end{table}

\subsection{Real-world experiments}
We further implement the proposed CVIRO algorithm based on OpenVINS \cite{OVS}, an open-source filter-based visual-inertial estimator, and evaluate its performance using the VIRAL dataset \cite{ntuviral}. The VIRAL dataset provides multi-modal sensor data collected from a drone, including IMU, stereo camera, lidar, and UWB, enabling comprehensive validation of the algorithm under realistic conditions. Specifically, the robot is equipped with four UWB tags that measure range data to three UWB anchor nodes for localization. Each tag has been pre-calibrated, ensuring that the terms ${^I\mathbf p_T}$ and $\mathbf b_{r_k}$ in \eqref{eq_uwb} are accurately known. Moreover, since the VIRAL dataset already provides compatibility with OpenVINS, it enables rapid development and testing of our proposed algorithm. To evaluate the estimation results, we follow the "Evaluation Recommendation" section of their tutorial \cite{ntuviral} and implement the result evaluation tools based on their provided MATLAB scripts. We present the estimation results of the proposed algorithm for different trajectories in Figure \ref{Fig_vr_traj}. The results demonstrate the effectiveness of the our CVIRO algorithm since the estimated trajectories are very close to the groundtruth trajectories.

We compare the proposed CVIRO method with the original OpenVINS and FEJ-VIRO to evaluate whether the proposed approach can effectively calibrate and fuse UWB measurements to enhance localization accuracy. All comparison results across different trajectories for each algorithm are provided in Table \ref{tab:vio_comparison}. For the FEJ-VIRO results, we directly use the data reported in their paper. From this table, we can see that the proposed CVIRO algorithm significantly outperforms the original OV algorithm by effectively integrating UWB ranging measurements, leading to improved localization accuracy. Moreover, the Lie group-based representation of our algorithm ensures observability consistency by leveraging the property of the invariant error, which further enhances the estimator's performance. In addition, our algorithm achieves comparable results to the FEJ-VIRO method across all tested sequences. Considering that our algorithm only utilizes MSCKF features and does not incorporate SLAM features like FEJ-VIRO in the current stage, this result further demonstrates the effectiveness of our approach. Since incorporating SLAM features into the Lie group-based state representation introduces additional complexity, it is currently beyond the scope of our work. However, from another perspective, we will address this limitation in future work and integrate SLAM features to further enhance localization accuracy.

\begin{figure}[htbp]
    \centering
    \begin{subfigure}[b]{0.23\textwidth}
        \centering
        \includegraphics[width=\textwidth]{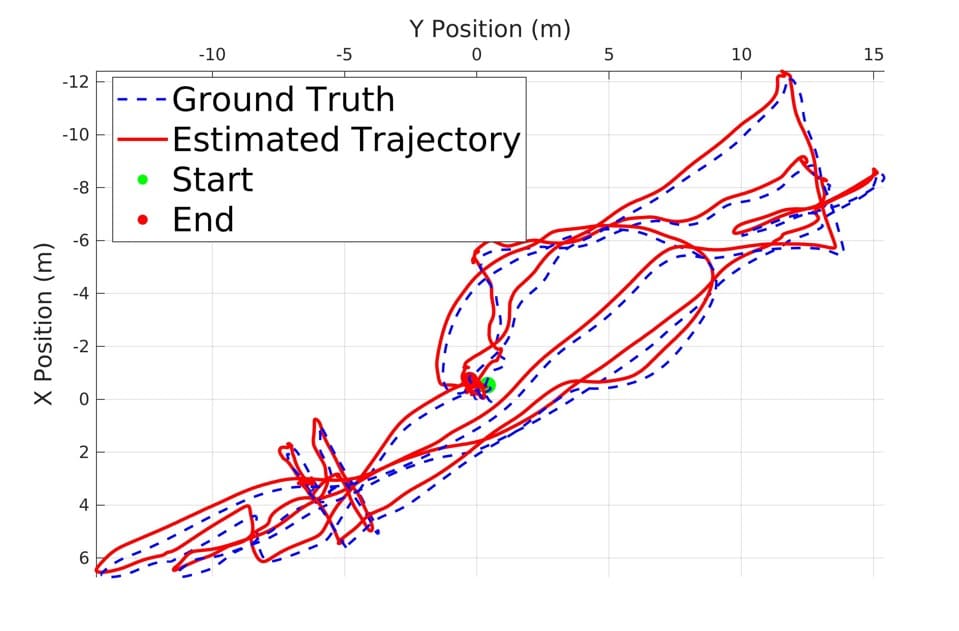}
        \caption{eee\_01}
        \label{fig:eee_01}
    \end{subfigure}
    \begin{subfigure}[b]{0.23\textwidth}
        \centering
        \includegraphics[width=\textwidth]{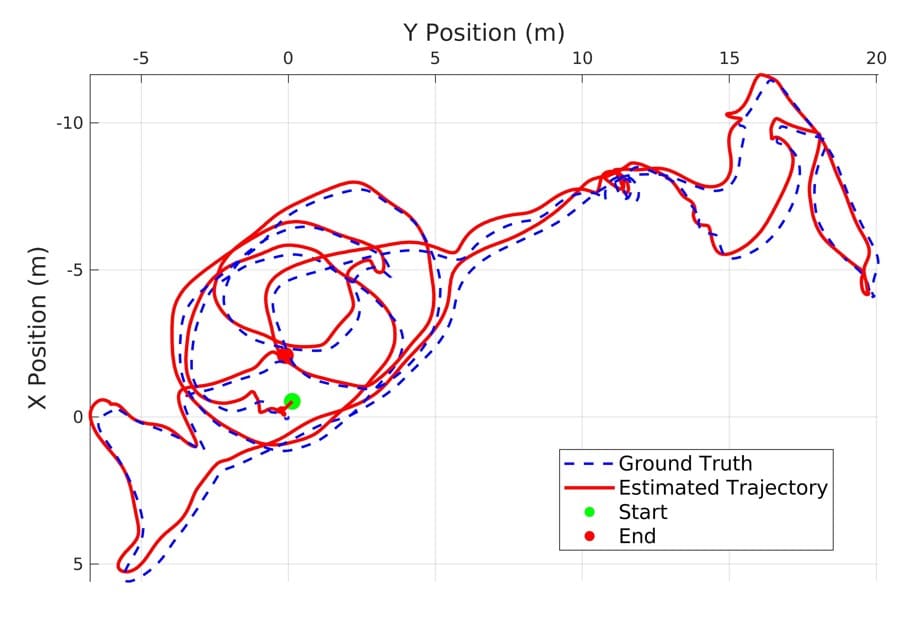}
        \caption{eee\_02}
        \label{fig:eee_02}
    \end{subfigure}
    \begin{subfigure}[b]{0.23\textwidth}
        \centering
        \includegraphics[width=\textwidth]{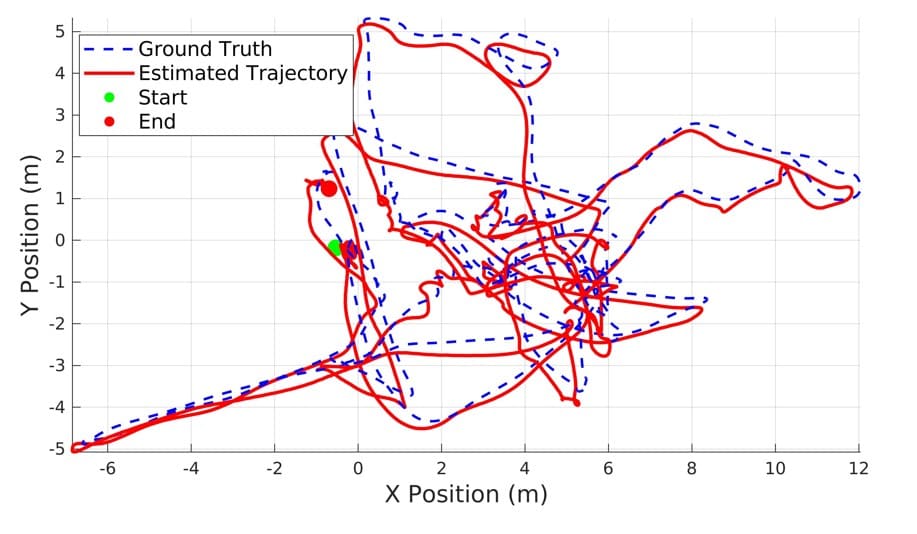}
        \caption{nya\_01}
        \label{fig:eee_01}
    \end{subfigure}
    \begin{subfigure}[b]{0.23\textwidth}
        \centering
        \includegraphics[width=\textwidth]{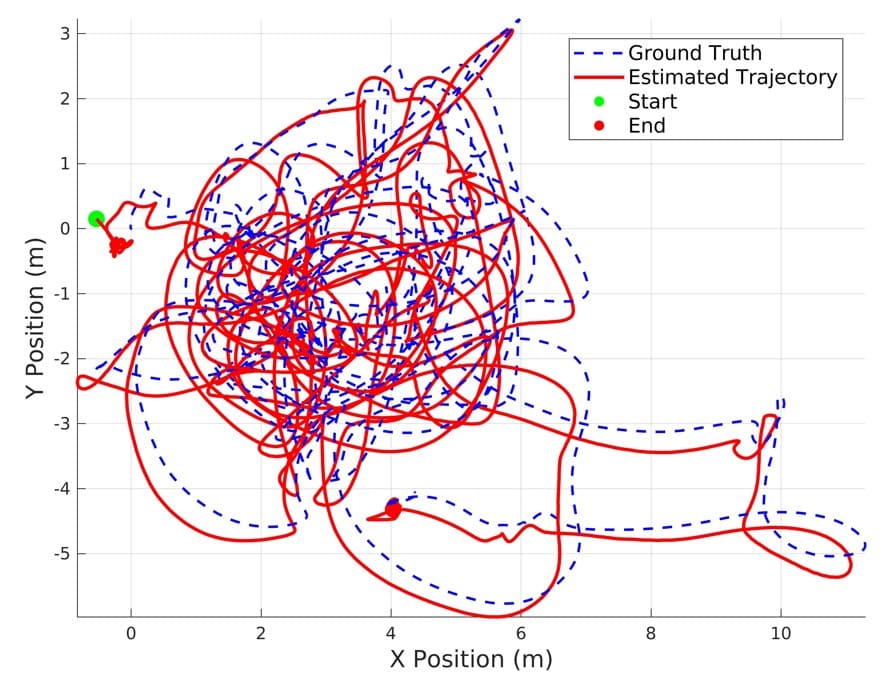}
        \caption{nya\_02}
        \label{fig:eee_02}
    \end{subfigure}
    \caption{Estimated trajectories for different sequences of the VIRAL dataset.}
    \label{Fig_vr_traj}
\end{figure}

\begin{table}[h]
    \centering
    \renewcommand{\arraystretch}{1.2}
    \caption{Comparison ATE results across different algorithms on VIRAL dataset. All the algorithms use $11$ clones during estimation. OV refers to OpenVINS, and F-VIRO refers to FEJ-VIRO method.}
    \begin{tabular}{lcccccc}
        \toprule
        \textbf{Method} & \textbf{eee\_01} & \textbf{eee\_02} & \textbf{eee\_03} & \textbf{nya\_01} & \textbf{nya\_02} & \textbf{nya\_03} \\
        \midrule
        F-VIRO  & 0.479 & 0.340 & 0.348 & 0.505 & \textbf{0.202} & 0.290 \\
        OV     & 0.645 & 0.605 & 0.417 & 0.739 & 0.473 & 0.690 \\
        CVIRO  & \textbf{0.424} & \textbf{0.316} & \textbf{0.322} & \textbf{0.490} & 0.265 & \textbf{0.267} \\
        \bottomrule
    \end{tabular}
    \label{tab:vio_comparison}
\end{table}

\section{Conclusion}
In this paper, we proposed a consistent and tightly-coupled visual-inertial-ranging odometry (CVIRO) system based on the Lie group to improve the accuracy and consistency of UWB-aided VIO. Our approach explicitly integrates UWB anchor states into the system state, enabling autonomous calibration while accounting for calibration uncertainty, which enhances estimation robustness and consistency. To ensure observability consistency, we introduced a Lie group-based multi-state constrained Kalman filter (MSCKF) that leverages right-invariant errors. We analytically proved that the proposed CVIRO estimator preserves the system’s four unobservable directions, ensuring that the estimator maintains the correct system structure without requiring external Jacobian manipulation. Through extensive Monte Carlo simulations and real-world experiments, we demonstrated the effectiveness of our work, which highlights the advantages of Lie group representations in tightly coupled UWB-aided VIO systems, offering a promising solution for drift-free, real-time state estimation in robotic applications. In future work, we aim to incorporate SLAM features and extend our framework to multi-robot scenarios.

\bibliographystyle{IEEEtran}
\bibliography{main} 

\addtolength{\textheight}{-12cm}   
\end{document}